\documentclass[11pt]{article}

\usepackage{newtxtext}
\usepackage[left=1.25in, top=1in, bottom=1in, right=1.25in]{geometry}

\usepackage[authoryear]{natbib}
\usepackage{graphicx}
\usepackage[utf8]{inputenc}
\usepackage[T1]{fontenc}
\usepackage[colorlinks=true,citecolor=blue,linkcolor=blue,urlcolor=blue,hypertexnames=false]{hyperref}
\usepackage{url}
\usepackage{booktabs}
\usepackage{amsfonts}
\usepackage{amssymb}
\usepackage{amsmath}
\usepackage{amsthm}
\usepackage{mathtools}
\usepackage{bm}
\usepackage{nicefrac}
\usepackage{microtype}
\usepackage{xcolor}
\usepackage{tcolorbox}
\usepackage{cleveref}
\usepackage{wrapfig}
\usepackage{pifont}
\usepackage{algorithm}
\usepackage{algorithmic}

\usepackage{enumitem}
\usepackage{float}
\usepackage{placeins}

\graphicspath{{fig/}}

\usepackage[textsize=tiny]{todonotes}

\theoremstyle{plain}
\newtheorem{theorem}{Theorem}[section]

\newtheorem{lemma}[theorem]{Lemma}
\newtheorem{corollary}[theorem]{Corollary}

\theoremstyle{definition}
\newtheorem{assumption}[theorem]{Assumption}

\crefname{assumption}{assumption}{assumptions}
\Crefname{assumption}{Assumption}{Assumptions}

\DeclareMathOperator{\msign}{msign}

\newcommand{\R}{\mathbb{R}}
\newcommand{\E}{\mathbb{E}}
\newcommand{\norm}[1]{\left\lVert #1 \right\rVert}
\newcommand{\inner}[2]{\left\langle #1,\, #2 \right\rangle}

\newcommand{\ip}[2]{\inner{#1}{#2}}
\newcommand{\F}{\mathrm{F}}
\newcommand{\NormOp}{\mathrm{Normalize}}
\newcommand{\Normalize}[1]{\NormOp\!\left(#1\right)}
\newcommand{\etaang}{\eta^{\mathrm{ang}}}
\newcommand{\wdc}{\lambda}
\newcommand{\lr}{\eta}

\title{Fantastic Pretraining Optimizers and Where to Find Them II:\\
Hyperball Optimization}

\newcommand{\stanfordaffil}{\textsuperscript{\(\dagger\)}}
\newcommand{\princetonaffil}{\textsuperscript{\(\ddagger\)}}
\newcommand{\tsinghuaaffil}{\textsuperscript{\S}}

\author{
Kaiyue Wen\stanfordaffil \quad
Xingyu Dang\princetonaffil \quad
Kaifeng Lyu\tsinghuaaffil \quad
Tengyu Ma\stanfordaffil \quad
Percy Liang\stanfordaffil\\
{\small
\texttt{kaiyuew@stanford.edu} \quad
\texttt{xingyu.dang@princeton.edu} \quad
\texttt{klyu@mail.tsinghua.edu.cn}}\\
{\small \texttt{tengyuma@stanford.edu} \quad
\texttt{pliang@cs.stanford.edu}}
}

\date{}

\begin{document}

\maketitle
\begingroup
\renewcommand{\thefootnote}{\fnsymbol{footnote}}
\footnotetext[2]{Stanford University. \quad
\princetonaffil Princeton University. \quad
\tsinghuaaffil Tsinghua University.}
\endgroup

\begin{abstract}
Matrix based optimizers such as Muon can substantially speed up language model pretraining, but their gains over AdamW are observed to shrink as model size and data scale grow when using standard constant decoupled weight decay.
We propose \textbf{Hyperball}, a simple optimizer wrapper that addresses this issue.
Given a base optimizer such as Adam or Muon, Hyperball sets the Frobenius norms of weight matrices and their corresponding optimizer updates to fixed constants.
On Qwen3 style models up to $1.2$B parameters, Muon Hyperball achieves $20$--$30\%$ token equivalent speedup over weight decay baselines.
Hyperball also improves learning rate transfer across widths and depths compared to decoupled weight decay.
This method is motivated by prior theory showing that training with weight decay leads to an equilibrium weight norm that only depends on the training hyperparameters. Through this mechanism, the weight decay then decides the angular learning rate, i.e. how fast the direction of the weight matrix changes.
\end{abstract}

\section{Introduction}
\label{sec:intro}

Previous work observed that the speedups of matrix based optimizers such as Muon~\citep{jordan2024muon,liu2025moonlight} over AdamW~\citep{loshchilov2019decoupled} shrink from roughly $30\%$ to about $10\%$ as model size and data scale grow~\citep{fantastic-optimizers-2024}.
This motivates a simple question: can we keep these optimizer speedups at higher compute?

We introduce \textbf{Hyperball} as a simple solution to the above question. Hyperball is an optimizer wrapper that enforces constant weight norms and update norms, transforming any base optimizer into its Hyperball variant.
The wrapper is motivated by prior theory on the role of weight decay in scale invariant layers and by the way most modern LLM training uses weight decay to control the size of the weights implicitly.
Let $W_t$ be the parameter matrix at step $t$, $u_t$ be the update provided by a base optimizer, $\lr_t$ be the learning rate, and $\wdc$ be the weight decay coefficient.
The standard decoupled weight decay update~\citep{loshchilov2019decoupled} applies
\[
W_{t+1} = (1-\lr_t\wdc)\,W_t - \lr_t\,u_t.
\]
Here $-\lr_t u_t$ adds new update information and typically increases the weight norm in the absence of weight decay.
The term $(1-\lr_t\wdc)W_t$ softly controls the norm by shrinking the weights toward zero every step.
In modern Transformer architectures with normalization layers~\citep{vaswani2017attention,xiong2020layer}, many weight matrices are modeled as scale invariant in the standard sense at the level of the loss: for a scalar $c>0$, rescaling one matrix leaves the loss unchanged, $L(cW) = L(W)$.
In this setting, weight decay is puzzling as classical $\ell_2$ regularization: if the loss is unchanged by the scale of $W$, penalizing $\norm{W}_\F$ cannot be the main reason it improves training.

Hyperball replaces this soft control on weight norm with an explicit constraint.
It decouples the magnitude of the weights from the direction of the update.
For a matrix $X$, the Frobenius norm $\norm{X}_\F = (\sum_{ij} X_{ij}^2)^{1/2}$ is the Euclidean norm of its entries.
Let $R=\norm{W_0}_\F$ be the initial Frobenius norm of the parameter matrix, and let $\Normalize{X}:=X/\norm{X}_\F$ be Frobenius normalization.
The Hyperball update is
\[
W_{t+1}
\;=\;
R\cdot
\Normalize{\Big(W_t - \lr_t\, R\cdot \Normalize{u_t}\Big)}.
\]
\begin{wrapfigure}{r}{0.46\textwidth}
  \centering
  \includegraphics[width=\linewidth]{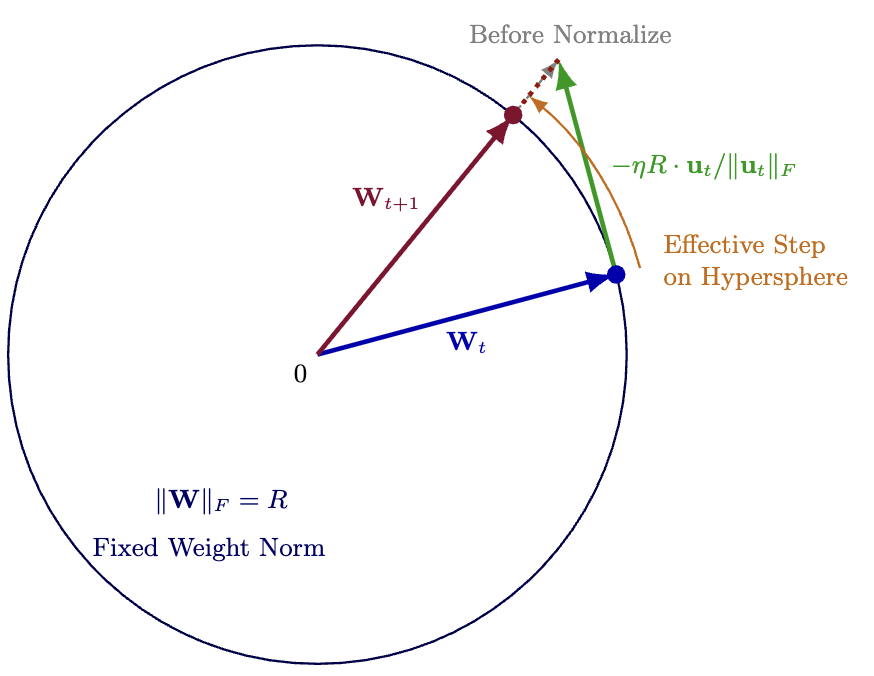}
  \caption{
    Geometric view of the Hyperball update.
    Weights are constrained to the sphere of radius $R$.
    Each step moves along the normalized update direction $-\Normalize{u_t}$ by a distance $\lr_t R$ and is immediately projected back to the sphere.
    The weight norm and update norm are held constant by construction.
  }
  \label{fig:hyperball-schematic}
\end{wrapfigure}
Geometrically, Hyperball constrains the optimization trajectory to the surface of a hypersphere with radius $R$.
The update takes a step of length $\lr_t R$ in the direction defined by the normalized update $-\Normalize{u_t}$, and the result is immediately projected back onto the sphere.
This keeps the norm of the weights and updates constant, so the optimizer navigates primarily through weight directions.

The base update $u_t$ can come from any optimizer. In this paper we focus on Adam Hyperball (AdamH) and Muon Hyperball (MuonH).
We apply Hyperball to Transformer weight matrices and use Adam for embeddings, normalization gains, and other parameters whose norm carries semantic information.
On $1.2$B parameter Qwen3 style models~\citep{yang2025qwen3}, MuonH achieves $20$--$30\%$ token equivalent speedup over its weight decay counterpart, whereas MuonW gives only about $10\%$ at this scale.
Across depth and width sweeps, Hyperball keeps the best learning rate window better than the baseline: the maximal drift is about $1.4\times$ for AdamH and MuonH, compared with $2$--$4\times$ for AdamW and MuonW baselines.

The optimization theory in \cref{sec:theory-optimization} explains why this explicit constraint matches the role that weight decay already plays in scale invariant layers.
Let $R_t=\norm{W_t}_\F$ be the Frobenius norm of the parameter matrix, and let $\widehat{W}_t=W_t/R_t$ be its direction.
The decomposition $W_t=R_t\widehat{W}_t$ separates radial norm dynamics from directional dynamics: to first order, the angular movement per step scales with the update norm divided by $R_t$.
Prior analyses of normalized networks and rotational equilibrium~\citep{li2020reconciling,roburin2020spherical,kosson2024rotational} show that, under a noise dominated model, decoupled weight decay balances stochastic norm growth and converges to an equilibrium radius.
Substituting this radius into the tangent dynamics yields an angular step size $\etaang$ that depends on the learning rate and weight decay mainly through the product $\lr\wdc$.
Hyperball uses this mechanism directly by fixing the radius and update norm, replacing the indirect calibration of $\wdc$ with an explicit angular learning rate schedule.

\begin{figure}[!t]
  \centering
  \includegraphics[width=\linewidth]{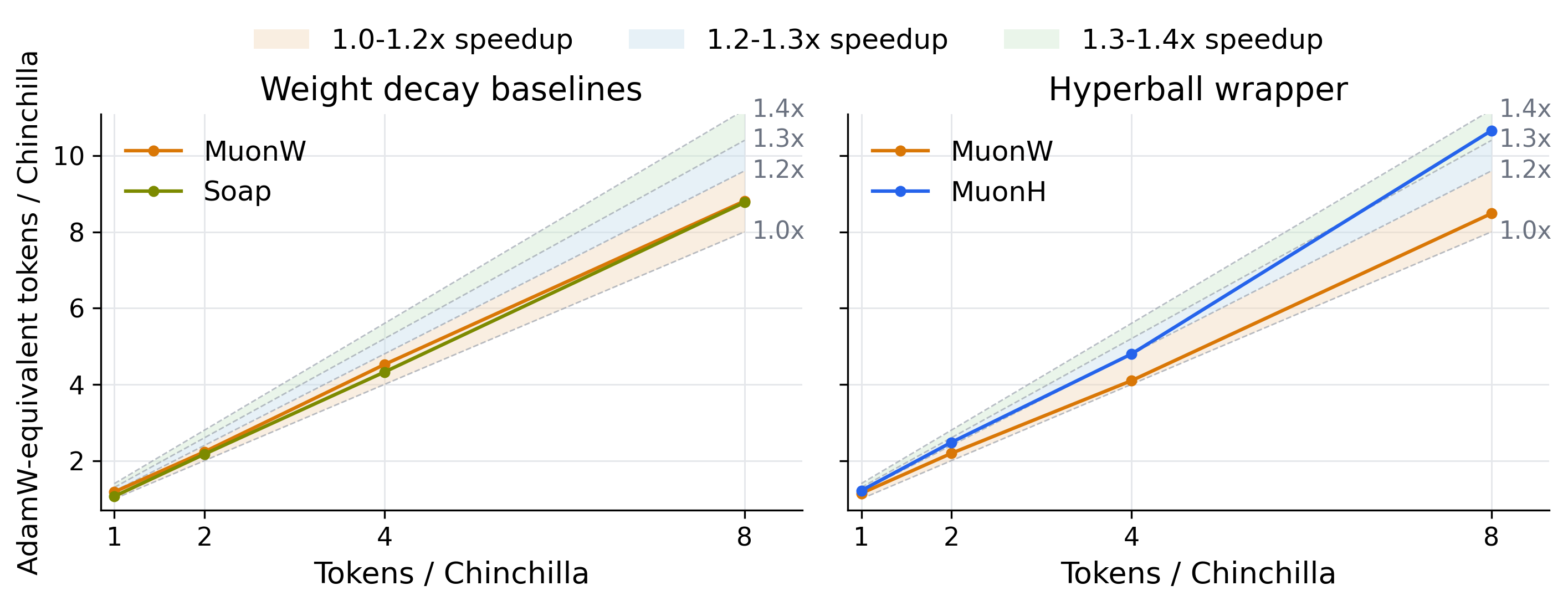}
  \caption{
    Token equivalent speedup over the AdamW scaling law at $1.2$B parameters across Chinchilla ratios $1\times$--$8\times$.
    The left panel is adapted from \citet{fantastic-optimizers-2024} and uses a setup without QK-Norm. The right panel uses the QK-Norm setup.
    In both setups, MuonW alone gains $\approx 10\%$ at this scale, while MuonH sustains $20$--$30\%$ speedup that grows with training duration.
  }
  \label{fig:speedup-1p2b}
\end{figure}

\section{Method}
\label{sec:method}

\paragraph{Definition.}
\label{sec:hyperball_def}

Let $W_t$ be the parameter matrix at step $t$, $u_t$ be the base optimizer update for this matrix (for example, Adam's preconditioned update~\citep{kingma2015adam} or Muon's matrix sign momentum update~\citep{jordan2024muon,liu2025moonlight}), $\lr_t$ be the Hyperball learning rate, $R>0$ be the fixed radius, and $\Normalize{X} := X/\norm{X}_\F$ be Frobenius normalization.
For each constrained matrix $W_0 \in \R^{d_{\mathrm{out}}\times d_{\mathrm{in}}}$, we initialize entries with standard deviation $1/\sqrt{d_{\mathrm{in}}}$ and set the radius once as $R=\norm{W_0}_\F$.
The Hyperball update is
\begin{equation}
\label{eq:hyperball-update}
W_{t+1} \;=\; R \cdot
\Normalize{\Big(W_t - \lr_t\, R \cdot \Normalize{u_t}\Big)}.
\end{equation}
Thus Hyperball takes an unconstrained optimizer step with norm $\lr_t R$, followed by radial renormalization to radius $R$ (\cref{alg:hyperball}, \cref{fig:hyperball-schematic}).
Write $\widehat{u}_t := u_t/\norm{u_t}_\F$ and $\widehat{W}_t := W_t/\norm{W_t}_\F$.
Equivalently, the exact displacement is
\begin{equation}
\label{eq:hyperball-displacement}
W_{t+1}-W_t
=
R\left(
\frac{\widehat{W}_t-\lr_t\widehat{u}_t}
{\norm{\widehat{W}_t-\lr_t\widehat{u}_t}_\F}
-\widehat{W}_t
\right),
\end{equation}
so radial renormalization returns the trial point to the sphere of radius $R$.

\begin{algorithm}[t]
\caption{Hyperball wrapper for a parameter matrix $W$}
\label{alg:hyperball}
\begin{algorithmic}[1]
\STATE {\bfseries Input:} parameter matrix $W_t$, base optimizer $\mathcal{O}$, optimizer state $\mathcal{S}_t$, radius $R$, schedule $\{\lr_t\}$
\STATE Compute base optimizer update $u_t, S_{t+1} \leftarrow \mathcal{O}(\nabla_{W_t} L(W_t), \mathcal{S}_t)$ 
\STATE Set normalized update direction $\widehat{u}_t \leftarrow \Normalize{u_t}$
\STATE Take unprojected step $\widetilde{W}_{t+1} \leftarrow W_t - \lr_t\,R\,\widehat{u}_t$
\STATE Project to radius $R$: $W_{t+1} \leftarrow R \cdot \Normalize{\widetilde{W}_{t+1}}$
\end{algorithmic}
\end{algorithm}

\paragraph{Where to apply the constraint.}
\label{sec:hyperball_where}

We apply Hyperball to attention and MLP weight matrices in a prenorm Transformer~\citep{vaswani2017attention,xiong2020layer}.
Embeddings, normalization gains, and other scalar parameters are updated with a standard optimizer (Adam in our experiments), since for these parameters the norm can carry semantic information.

\paragraph{Discussion of the design.}
\label{sec:hyperball_discussion}

A natural alternative to the Frobenius norm constraint is a spectral norm constraint.
Let $\norm{W}_{\mathrm{op}}$ denote the operator norm of a matrix.
The spectral condition of \citet{yang2023spectral} identifies relative spectral update size as a central quantity for feature learning, and SSO constrains training to the spectral sphere by steepest descent or projection~\citep{xie2026sso}.
In this view, a spectral Hyperball variant would fix $\norm{W_t}_{\mathrm{op}}$ and normalize the update in operator norm, directly controlling the sharpest layerwise scaling factor.

We use the Frobenius version in \eqref{eq:hyperball-update} for two reasons: computation cost and the theoretical motivation in \cref{sec:mechanism}.
Projection onto the Frobenius sphere is $O(N^2)$ per matrix, whereas exact spectral projection generally requires an SVD and costs $O(N^3)$.
One diagnostic for whether Frobenius control is close to spectral control is the stable rank ratio.
For a parameter matrix $W \in \mathbb{R}^{d_{\mathrm{out}}\times d_{\mathrm{in}}}$, define
\begin{equation}
\label{eq:stable-rank}
\mathcal{R}(W) \;:=\; \frac{\norm{W}_\F^2}{\norm{W}_{\mathrm{op}}^2} \;\in\; [1,\, \min(d_{\mathrm{in}}, d_{\mathrm{out}})].
\end{equation}
Values satisfying $\mathcal{R}(W) = \Omega(\min(d_{\mathrm{in}}, d_{\mathrm{out}}))$ indicate that the singular value spectrum is not dominated by a single direction, in which case Frobenius constraints behave similarly to spectral constraints up to a slowly varying factor.
The same high stable rank regime is observed empirically in the Kimi Moonlight analysis~\citep[Appendix~F]{liu2025moonlight}.
A spectral Hyperball variant is a natural direction when singular value concentration makes Frobenius control too loose.

\section{Hyperball Experiments}
\label{sec:experiments}

\subsection{Setup}
\label{sec:experiments_setup}

Unless noted otherwise, we use a Qwen3 style decoder only architecture~\citep{yang2025qwen3} with QK-Norm~\citep{henry2020query}, trained on a mixture of DCLM-baseline~\citep{li2024datacomp}, StarCoder~\citep{li2023starcoder}, and ProofPile~2~\citep{azerbayev2023llemma} (and FineWeb-Edu~\citep{penedo2024fineweb} for some runs).
We compare Adam~\citep{kingma2015adam} and Muon~\citep{jordan2024muon,liu2025moonlight} with decoupled weight decay (AdamW and MuonW) against their Hyperball variants AdamH and MuonH. 
For the speedup metric, we fit a scaling law to AdamW across Chinchilla ratios~\citep{hoffmann2022training} $\{1\times, 2\times, 4\times, 8\times\}$ and report, for each method's final loss, the token ratio $\tau = N_{\mathrm{AdamW}}/N_{\mathrm{method}}$ that AdamW would need to match it.
For learning rate transfer, we sweep a multiplicative grid (ratio $\sqrt{2}$) at each scale $s$, define $\lr^\star(s) := \arg\min_{\lr_k}\ \mathrm{ValLoss}(s, \lr_k; T)$, and report $\mathrm{Drift} := \max_s \lr^\star(s) \,/\, \min_s \lr^\star(s)$.

\subsection{End-to-end speedup}
\label{sec:experiments_speedup}

In the weight decay baseline setting adopted from \citet{fantastic-optimizers-2024}, $1.2$B parameter Qwen3 style models are trained over Chinchilla ratios $1\times$--$8\times$, and MuonW alone yields $\approx 10\%$ token equivalent speedup over the AdamW scaling law.
MuonH instead sustains $20$--$30\%$ speedup, and the gap \emph{grows} with training duration (\cref{fig:speedup-1p2b}).
Qualitatively, Hyperball starts slightly worse but overtakes WD as the learning rate decays.
On the Marin speedrun benchmark (FineWeb-Edu, $1\times$ Chinchilla), AdamH and MuonH match WD baselines that are $\approx 10\%$ larger (\cref{fig:marin-speedrun-ferries}, left).
In Marin Ferries, scaling MuonH to $8$B parameters yields a further $0.04$ loss improvement over the AdamW baseline, with both runs using manually chosen hyperparameters (\cref{fig:marin-speedrun-ferries}, middle and right).

\begin{figure}[t]
  \centering
  \includegraphics[width=\linewidth]{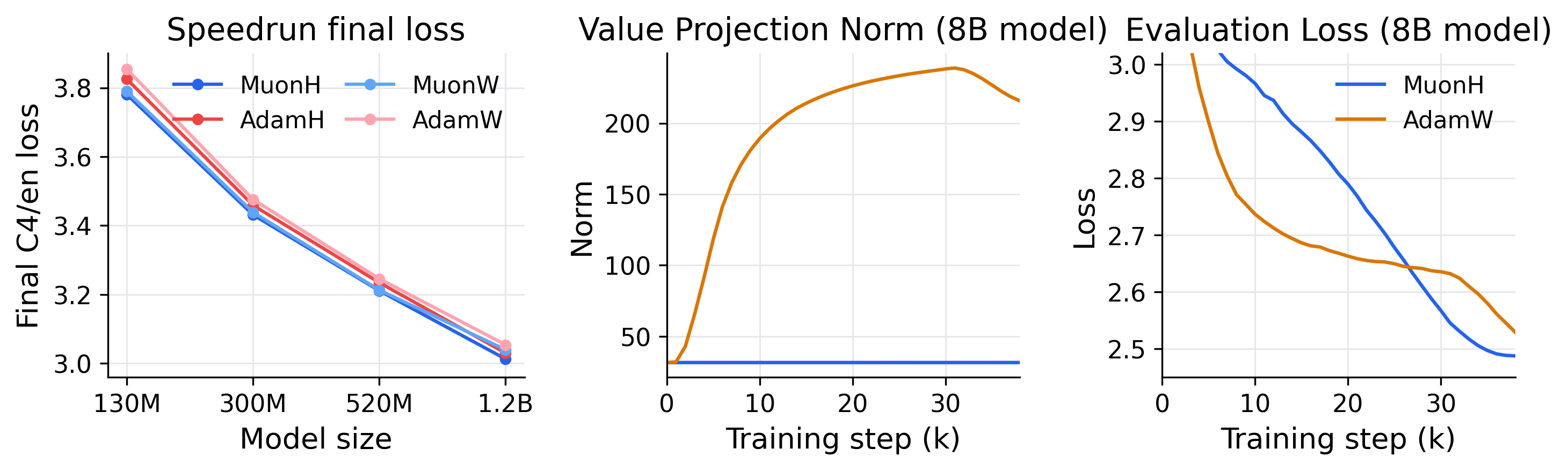}
  \caption{
    Additional Marin benchmarks.
    Left: final C4/en loss~\citep{raffel2020exploring} on the FineWeb-Edu speedrun benchmark at $1\times$ Chinchilla.
    Middle and right: $8$B model comparison over $159$B tokens. MuonH fixes the layer 9 value projection matrix norm and finishes $0.04$ lower than the AdamW baseline.
  }
  \label{fig:marin-speedrun-ferries}
\end{figure}

\begin{figure}[!htbp]
  \centering
  \includegraphics[width=0.92\linewidth]{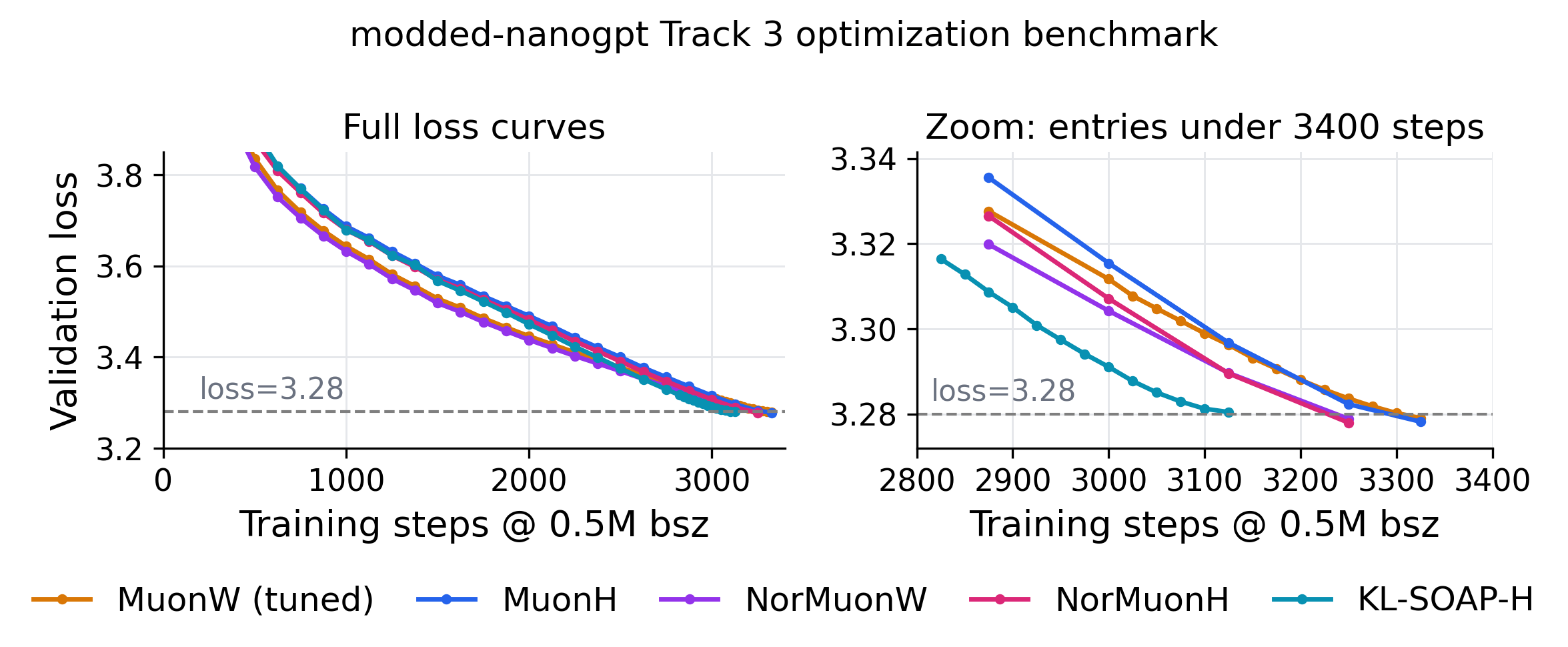}
  \caption{
    The modded-nanogpt Track~3 optimization benchmark.
    Curves show average validation loss from the public Track~3 logs versus training step. Lower and further left is better. Left: full trajectories. Right: zoom near the $3.27$--$3.28$ validation loss band for entries reaching this band within $3400$ steps.
    Hyperball variants improve the matched weight decay baselines in this comparison, and KL-SOAP-H, which denotes KL-SOAP~\citep{lin2026klshampoo} combined with Hyperball, reaches average validation loss $3.2780$ in $3125$ steps.
  }
  \label{fig:track3-optimization}
\end{figure}

On the public modded-nanogpt Track~3 optimization benchmark, which fixes the model and data and measures optimizer progress by step count, the corresponding WD baselines reach average validation loss $3.2790$ in $5625$ steps for the single run AdamW baseline, $3.2790$ in $3325$ steps for MuonW (tuned), and $3.2789$ in $3250$ steps for NorMuonW~\citep{li2025normuon}.
AdamH reaches average validation loss $3.2741$ in $4875$ steps. MuonH reaches average validation loss $3.2782$ in $3325$ steps, NorMuonH reaches average validation loss $3.2778$ in $3250$ steps, and KL-SOAP-H~\citep{lin2026klshampoo} reaches average validation loss $3.2780$ in $3125$ steps (\cref{fig:track3-optimization})~\citep{modded-nanogpt-track3}.
This result shows that Hyperball is not tied to Muon or Adam: paired with KL-SOAP, it reaches the fastest loss at step result in this comparison.

\subsection{Hyperparameter transfer}
\label{sec:experiments_transfer}

By construction, Hyperball fixes $\norm{W_t}_\F=R$ and uses unit Frobenius update directions, so $\lr_t$ directly sets the relative update length.
The optimal learning rate should therefore be approximately scale invariant.
We test this in two sweeps with $10$B tokens per run.
These transfer runs use a hybrid normalization architecture variant~\citep{zhuo2025hybridnorm}, with QK-Norm enabled.

\begin{figure}[!htbp]
  \centering
  \includegraphics[width=0.9\linewidth]{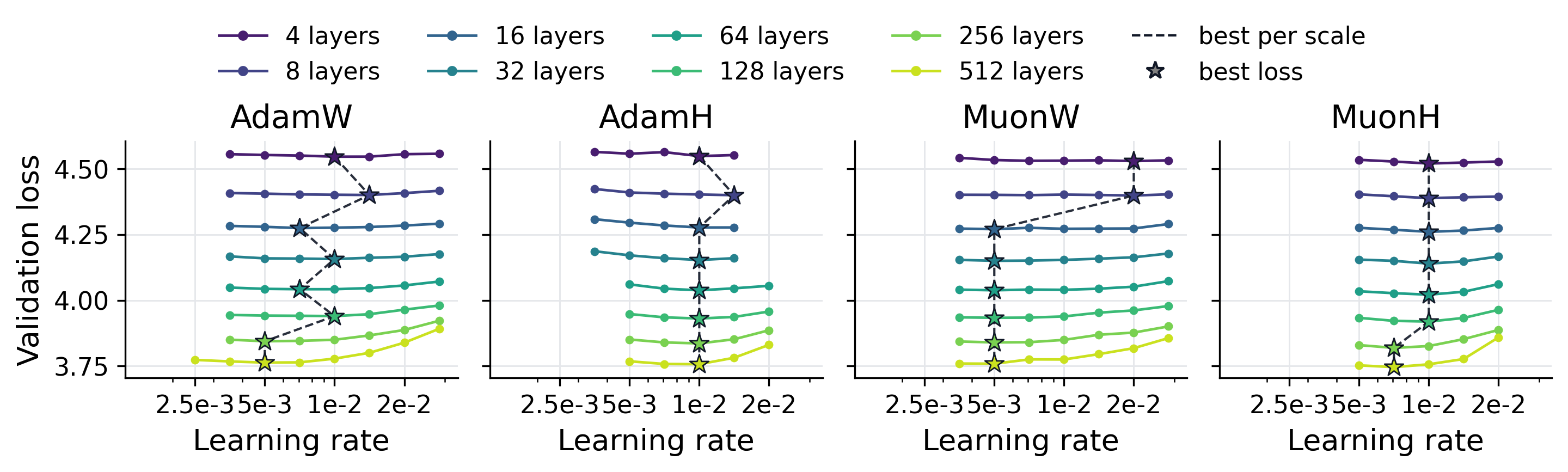}
  \caption{
    Depth scaling at fixed $d=128$, $10$B tokens per run.
    Each curve shows final validation loss versus learning rate for a given depth. Stars mark the best learning rate.
    Hyperball variants reduce the optimal learning rate drift across $L \in [4, 512]$ to $\approx 1.4\times$, versus $2$--$4\times$ for AdamW and MuonW.
  }
  \label{fig:depth-scaling}
\end{figure}

For depth scaling at fixed hidden dimension $d=128$ and $L \in \{4, \dots, 512\}$, the maximal drift of the optimal learning rate is $\approx 1.4\times$ for AdamH and MuonH, versus $\approx 3\times$ for AdamW and $\approx 4\times$ for MuonW even at $L=512$ (\cref{fig:depth-scaling}).

\begin{figure}[!htbp]
  \centering
  \includegraphics[width=0.9\linewidth]{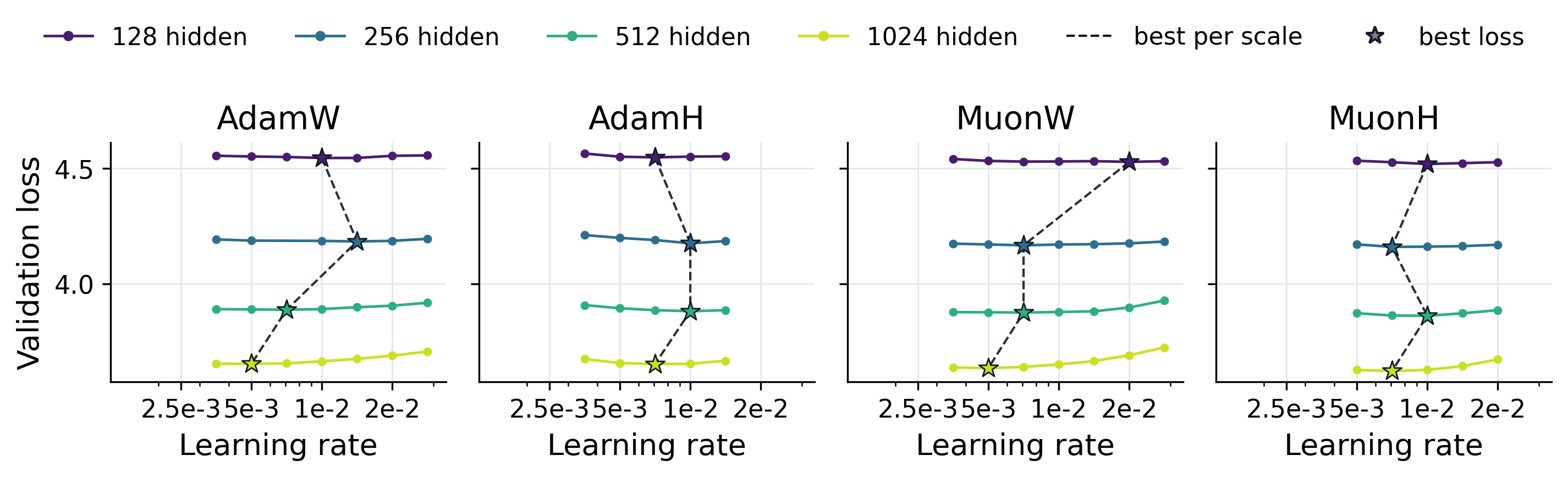}
  \caption{
    Width scaling at fixed $L=4$, $10$B tokens per run.
    Hyperball variants reduce the optimal learning rate drift across $d \in [128, 2048]$ to $\approx 1.4\times$.
  }
  \label{fig:width-scaling}
\end{figure}

For width scaling at fixed depth $L=4$ and $d \in \{128, \dots, 2048\}$, the same $\approx 1.4\times$ drift holds for both Hyperball variants (\cref{fig:width-scaling}).

\subsection{Overtrained Setting}
\label{sec:experiments_long_horizon}

We also test Hyperball in an overtrained data scaling setting.
For a $130$M parameter model, we train MuonW and MuonH over token budgets from $1$B to $128$B and sweep the learning rate at each budget.
Both MuonW and MuonH use the same hybrid normalization architecture variant as in previous section.
MuonH attains lower best C4 validation loss across the full range (\cref{fig:overtrained-130m}).
Fitting $L(N)=L_\infty + A N^{-\alpha}$ to the best loss curve gives $L_\infty=3.065$ for MuonH versus $3.079$ for MuonW.

\begin{figure}[!htbp]
  \centering
  \includegraphics[width=0.78\linewidth]{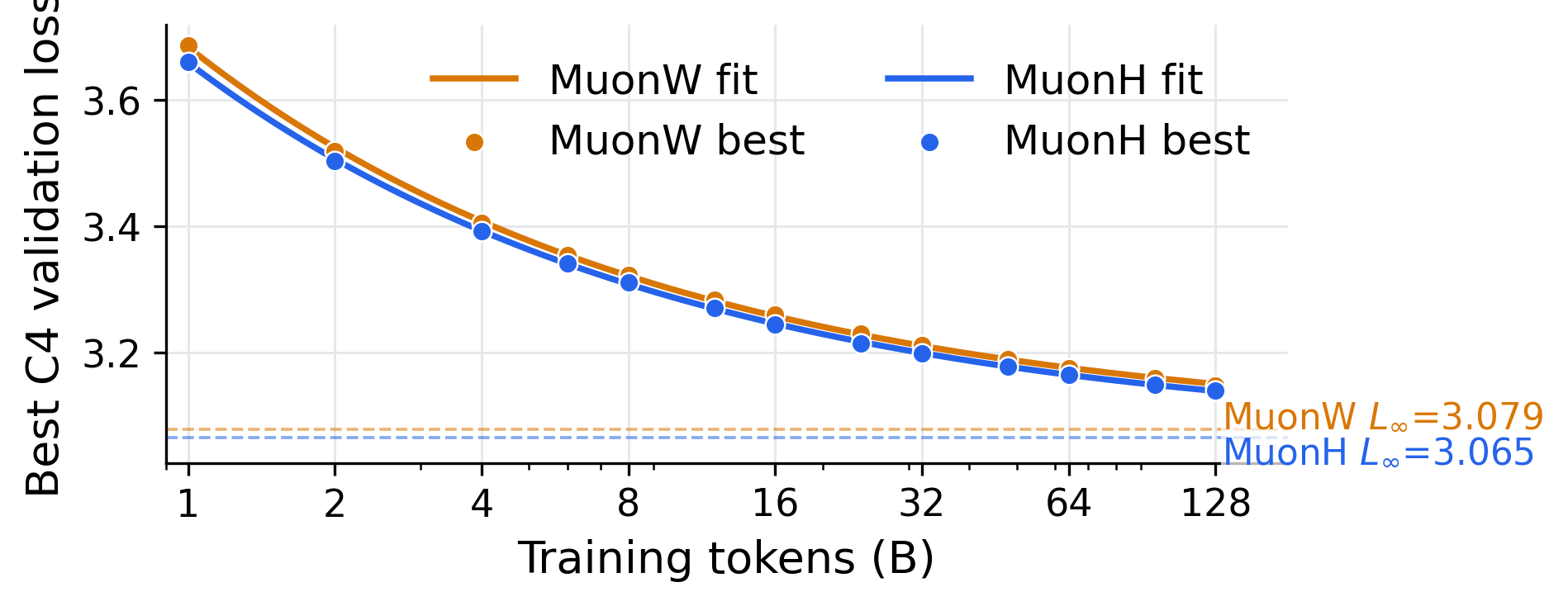}
  \caption{
    Overtrained $130$M data scaling sweep.
    Points show the best C4 validation loss over learning rate sweeps at each token budget.
    Solid curves fit $L(N)=L_\infty+A N^{-\alpha}$. Dashed lines mark the fitted asymptotes.
  }
  \label{fig:overtrained-130m}
\end{figure}

\FloatBarrier

\section{Theory}
\label{sec:mechanism}

\subsection{Expressivity}
\label{sec:theory-expressivity}

We first show that, in normalized networks, fixing the Frobenius norm of a weight matrix wouldn't limit representation power. This is because a trainable normalization gain can absorb the scale of the weight matrix, so Hyperball changes the optimization geometry without removing represented functions.
Let $h$ be the hidden state, $W$ be a weight matrix, and $\gamma$ be the RMSNorm gain~\citep{zhang2019root}.
For a a linear map placed after a layer norm,
\begin{equation}
\label{eq:prenorm-linear}
f(h;W,\gamma) \;=\; W\,(\gamma \odot \mathrm{RMSNorm}(h)).
\end{equation}
For any scalar $c>0$, the joint rescaling $(W,\gamma) \mapsto (cW, \gamma/c)$ leaves the represented function unchanged:
\begin{equation}
\label{eq:compensate}
f(h;cW,\gamma/c) = f(h;W,\gamma).
\end{equation}
Thus constraining $\norm{W}_\F$ need not reduce the represented function class when a trainable normalization gain can absorb the scale.

\subsection{Optimization}
\label{sec:theory-optimization}
With $W$ being a parameter block and $\mathcal{W}^c$ being the remaining parameters in the neural network. 
A loss $L(W, \mathcal{W}^c)$ is \emph{scale invariant} in a parameter block $W$ if $L(cW, \mathcal{W}^c) = L(W, \mathcal{W}^c)$ for all $c>0$. We will drop $\mathcal{W}^c$ and use the shorthand $L(W)$ from now on.
This is the notion used throughout the analysis: the radial coordinate $\norm{W}_\F$ is redundant, and the loss depends only on the direction $\widehat W = W/\norm{W}_\F$.
Many weight matrices in modern LLMs are only approximately scale invariant in this loss level sense, but the exact scale invariant model is a useful local approximation for the norm dynamics below.

\paragraph{Decoupled weight decay and angular motion.}
Let $W_t$ be the parameter matrix at step $t$, $u_t$ be the base optimizer update for this matrix, $\lr_t$ be the learning rate, $\wdc$ be the weight decay coefficient, and $\alpha_t := 1-\lr_t\wdc$.
The standard decoupled weight decay update~\citep{loshchilov2019decoupled} is
\begin{equation}
\label{eq:wd-update}
W_{t+1} = \alpha_t W_t - \lr_t u_t.
\end{equation}
Let $R_t := \norm{W_t}_\F$ and $\widehat W_t := W_t/R_t$.
The angular step size is the per step movement on the unit sphere,
\begin{equation}
\label{eq:etaang-def}
\etaang_t := \norm{\widehat W_{t+1}-\widehat W_t}_\F,
\end{equation}
and the relative update ratio is
\begin{equation}
\label{eq:rho-def}
\rho_t := \norm{u_t}_\F/\norm{W_t}_\F.
\end{equation}
For scale invariant $L$, the function represented by the block is determined by $\widehat W_t$, so $\etaang_t$ is the optimizer-controlled quantity that determines how fast this block moves in function space.
The direction after one decoupled step is exactly
\begin{equation}
\label{eq:direction-update}
\widehat W_{t+1}
=
\frac{\alpha_t\widehat W_t - \lr_t u_t/R_t}
{\norm{\alpha_t\widehat W_t - \lr_t u_t/R_t}_\F}.
\end{equation}
Thus, for fixed $\norm{u_t}_\F$, a larger radius $R_t$ gives a smaller angular movement.
Weight decay therefore acts as an indirect angular step controller by regulating $R_t$.

\paragraph{Concrete base updates.}
The analysis below uses AdamW, Muon, and Moonlight scaled Muon as examples.
Let $\ell_t$ be the minibatch loss, $g_t := \nabla_W\ell_t(W_t)$ be the stochastic gradient for the matrix block, let $\epsilon>0$ be Adam's numerical stability constant, and let divisions and square roots in Adam be elementwise.
For AdamW~\citep{loshchilov2019decoupled},
\begin{equation}
\label{eq:adamw-base-update}
m_t = \beta_1 m_{t-1} + (1-\beta_1)g_t,
\qquad
v_t = \beta_2 v_{t-1} + (1-\beta_2)g_t^{\odot 2},
\qquad
u_t = \frac{\widehat m_t}{\sqrt{\widehat v_t}+\epsilon},
\end{equation}
where $\widehat m_t$ and $\widehat v_t$ denote the bias corrected moments.
For Muon~\citep{jordan2024muon,liu2025moonlight}, let
\begin{equation}
\label{eq:muon-momentum}
M_t = \beta_1M_{t-1} + (1-\beta_1)g_t.
\end{equation}
For a matrix $A$ with compact singular value decomposition $A=P\Sigma Q^\top$, define the exact SVD matrix sign map by
\begin{equation}
\label{eq:matrix-sign-def}
\msign(A) := P Q^\top.
\end{equation}
Muon implementations often compute this map by Newton--Schulz iteration; in this theory we analyze the idealized SVD Muon update.
For $W\in\R^{d_{\mathrm{out}}\times d_{\mathrm{in}}}$, define $s_\mu:=\max(1,\sqrt{d_{\mathrm{out}}/d_{\mathrm{in}}})$.
The Muon base update is
\begin{equation}
\label{eq:muon-base-update}
u_t = s_\mu\,\msign(M_t).
\end{equation}
Moonlight~\citep{liu2025moonlight} uses the same momentum and exact SVD sign map, but replaces $s_\mu$ with $s_{\mathrm{moon}}:=0.2\sqrt{\max(d_{\mathrm{in}},d_{\mathrm{out}})}$:
\begin{equation}
\label{eq:moonlight-base-update}
u_t = s_{\mathrm{moon}}\,\msign(M_t).
\end{equation}

\paragraph{Idealized stationary model.}
We first make an assumption that assume we have an infinite history of gradient. This allows us to ignore boundary conditions on gradient when we consider momentum and weights. 

\begin{assumption}[Infinite history optimizer]
\label[assumption]{ass:infinite-history}
The gradient sequence $\{g_t\}_{t\in\mathbb Z}$ and the weight sequence $\{W_t\}_{t\in\mathbb Z}$ are defined for all integer times, including $t<0$.
Optimizer states are computed from this infinite past, i.e.,
\begin{equation}
m_t=(1-\beta_1)\sum_{i\ge0}\beta_1^i g_{t-i},
\qquad
M_t=(1-\beta_1)\sum_{i\ge0}\beta_1^i g_{t-i}.
\end{equation}
In the constant learning rate calculation below, $W_t$ is also taken to be the solution obtained by running \eqref{eq:wd-update} from the infinite past.
Intuitively, this describes the regime where training has already run for a long time, so the dependence on the initial optimizer state and initial weight has decayed.
\end{assumption}

We then make the following assumption on the distribution of $g_t$.

\begin{assumption}[Isotropic stationary gradients and idealized base maps]
\label[assumption]{ass:noise}
Let $p=d_{\mathrm{out}}$, $q=d_{\mathrm{in}}$, $d=pq$, and $r=\min(p,q)$.
After vectorizing the matrix block, the stochastic optimizer input is an iid isotropic Gaussian sequence,
\begin{equation}
\label{eq:noise-model}
g_t\sim\mathcal N(0,\sigma^2 I_d),
\qquad
\{g_t\}_{t\in\mathbb Z}\text{ independent.}
\end{equation}
\end{assumption}

\Cref{ass:noise} is intentionally idealized.
It ignores anisotropy, layer specific structure, and the signal in $\E[g_t]$.
Its purpose is to provide an easy to compute ansatz for update norms, update autocorrelations, radial equilibria, and angular step sizes.

For AdamW, we consider an idealized AdamW update by assuming that the second moment is correctly estimated.
For a scalar coordinate $\bar g_t$ with $\E[\bar g_t^2]=\sigma^2$, the stationary Adam second moment satisfies
\begin{equation}
\E[v_t]
=
(1-\beta_2)\sum_{i\ge0}\beta_2^i\,\E[\bar g_{t-i}^2]
=
(1-\beta_2)\sum_{i\ge0}\beta_2^i\,\sigma^2
=
\sigma^2.
\end{equation}
We therefore replace the Adam denominator by $\sigma$ and ignore the vanishing bias correction transient:
\begin{equation}
\label{eq:idealized-adam-update}
u_t = \frac{m_t}{\sigma},
\qquad
m_t=(1-\beta_1)\sum_{i\ge 0}\beta_1^i g_{t-i}.
\end{equation}

\paragraph{Update norm and autocorrelation.}

We will first study the correlation between updates at different steps, referred to as update autocorrelation.For Muon, the update autocorrelation is related to the following matrix sign map.

For $\rho\in[-1,1]$, define the SVD Muon sign kernel
\begin{equation}
\label{eq:muon-kernel}
\kappa_{p,q}(\rho)
:=
\frac{1}{\min(p,q)}\,\E\Big[\ip{\msign(X)}{\msign(\rho X + \sqrt{1-\rho^2}\,Z)}\Big],
\end{equation}
where $X,Z\in\R^{p\times q}$ have iid $\mathcal N(0,1)$ entries and are independent.
The normalization gives $\kappa_{p,q}(1)=1$ and $\kappa_{p,q}(0)=0$.

\begin{lemma}[Update norm and update autocorrelation]
\label{lem:update-norm}
Under \cref{ass:infinite-history,ass:noise}, define the update scale $U$ by
\begin{equation}
\label{eq:update-norm-main}
U =
\begin{cases}
\sqrt{\dfrac{1-\beta_1}{1+\beta_1}}\,\sqrt{pq} & \text{(idealized AdamW)},\\[0.7em]
\sqrt{p} & \text{(SVD Muon)},\\[0.25em]
0.2\sqrt{pq} & \text{(Moonlight scaled SVD Muon)},
\end{cases}
\end{equation}
Define the normalized autocorrelation sequence by $c_0=1$ and, for every lag $h\ge 1$,
\begin{equation}
\label{eq:optimizer-corrs}
c_h=
\begin{cases}
\beta_1^h & \text{(idealized AdamW)},\\[0.25em]
\kappa_{p,q}(\beta_1^h) & \text{(SVD Muon and Moonlight scaled SVD Muon)}.
\end{cases}
\end{equation}
For every lag $h\ge1$, these definitions give the second moment identities
\begin{equation}
\label{eq:update-moment-identities}
\E\norm{u_t}_\F^2=U^2,
\qquad
\E\ip{u_t}{u_{t-h}}=U^2c_h.
\end{equation}
\end{lemma}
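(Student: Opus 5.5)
We prove \eqref{eq:update-moment-identities} separately for the idealized AdamW update \eqref{eq:idealized-adam-update} and for the two SVD Muon variants \eqref{eq:muon-base-update}, \eqref{eq:moonlight-base-update}. The common first step is Gaussian: under \cref{ass:infinite-history,ass:noise} the momentum $m_t=M_t=(1-\beta_1)\sum_{i\ge0}\beta_1^i g_{t-i}$ is a convergent (in $L^2$) linear combination of independent $\mathcal N(0,\sigma^2 I_d)$ vectors. It is therefore itself centered Gaussian with covariance $\sigma^2(1-\beta_1)^2\sum_{i\ge0}\beta_1^{2i} I_d=\sigma^2\frac{1-\beta_1}{1+\beta_1}I_d$. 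For a lag $h\ge1$, the pair $(M_{t-h},M_t)$ is jointly Gaussian, and every coordinate has the cross covariance
\[
\E[(M_t)_{ij}(M_{t-h})_{ij}]=(1-\beta_1)^2\sum_{i\ge0}\beta_1^{i+h}\beta_1^{i}\sigma^2=\beta_1^h\,\sigma^2\tfrac{1-\beta_1}{1+\beta_1},
\]
with distinct coordinates independent. Hence the correlation coefficient is exactly $\beta_1^h$.

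For idealized AdamW the claims follow directly. We have $\E\norm{u_t}_\F^2=\sigma^{-2}\E\norm{m_t}_\F^2=pq\frac{1-\beta_1}{1+\beta_1}=U^2$, and $\E\ip{u_t}{u_{t-h}}=\sigma^{-2}\,pq\,\beta_1^h\sigma^2\frac{1-\beta_1}{1+\beta_1}=U^2\beta_1^h$, so $c_h=\beta_1^h$.

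For Muon, set $s^2:=\sigma^2\frac{1-\beta_1}{1+\beta_1}$ and write $M_{t-h}=sX$ with $X$ having iid $\mathcal N(0,1)$ entries. By the Gaussian regression decomposition we can write $M_t=s(\rho X+\sqrt{1-\rho^2}Z)$ with $\rho=\beta_1^h$ and $Z$ an independent standard Gaussian matrix. This is the one genuinely structural step: we need the joint law of $(M_{t-h},M_t)$, and because all coordinates are independent with the same covariance structure, matching the joint covariance determines it. Since $\msign$ is invariant under positive scaling, $\msign(M_t)=\msign(\rho X+\sqrt{1-\rho^2}Z)$ and $\msign(M_{t-h})=\msign(X)$.

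Next we compute the norm. The matrix $X$ has full rank $r=\min(p,q)$ almost surely, so $\msign(X)=PQ^\top$ has $r$ unit singular values and $\norm{\msign(X)}_\F^2=r$. Substituting this into the kernel definition \eqref{eq:muon-kernel} gives
\[
\E\ip{u_t}{u_{t-h}}=s_\bullet^2\, r\,\kappa_{p,q}(\beta_1^h),\qquad \E\norm{u_t}_\F^2=s_\bullet^2 r,
\]
where $s_\bullet$ is the relevant scale. For Muon, $s_\mu^2 r=\max(1,p/q)\min(p,q)=p$, so $U=\sqrt p$. A case check on whether $p\ge q$ confirms that both orderings give $p$. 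For Moonlight, $s_{\mathrm{moon}}^2 r=0.04\max(p,q)\min(p,q)=0.04\,pq$, so $U=0.2\sqrt{pq}$. In both cases $c_h=\kappa_{p,q}(\beta_1^h)$.

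The main obstacle is measure-theoretic rather than computational. We need $\msign$ to be well defined almost surely, which holds because Gaussian matrices have distinct nonzero singular values with probability one; this ensures that $\msign(\rho X+\sqrt{1-\rho^2}Z)$ is full rank and uniquely defined. We also need the infinite sum defining $M_t$ to converge almost surely and in $L^2$, so that the Gaussianity and covariance computations are valid. Boundedness of $\msign$ in Frobenius norm, by $\sqrt r$, makes all expectations finite, which justifies exchanging the limit and the expectation.
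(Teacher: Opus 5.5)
Your proposal is correct and follows the paper's proof step for step. The steps match: coordinatewise moving-average moments for idealized AdamW; $\norm{\msign(M_t)}_\F^2=r$, which gives $s_\mu^2 r=p$ and $s_{\mathrm{moon}}^2 r=0.04\,pq$; and the Gaussian coupling $(X,\rho X+\sqrt{1-\rho^2}Z)$ with $\rho=\beta_1^h$, together with positive-scale invariance of $\msign$, to obtain $c_h=\kappa_{p,q}(\beta_1^h)$. Your extra remarks (almost-sure full rank so the norm is exactly $r$, and $L^2$ convergence of the momentum series) make explicit what the paper leaves implicit, though distinct singular values are not actually needed, since the polar factor $PQ^\top$ of the compact SVD is unique regardless.
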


\begin{proof}
For AdamW, each coordinate of \eqref{eq:idealized-adam-update} is a stationary Gaussian moving average.
If $\bar g_t$ is one coordinate, then
\begin{equation}
\bar u_t=(1-\beta_1)\sum_{i\ge0}\beta_1^i\frac{\bar g_{t-i}}{\sigma}.
\end{equation}
Thus $\E[\bar u_t^2]=(1-\beta_1)/(1+\beta_1)$.
For lag $h\ge1$,
\begin{equation}
\E[\bar u_t\bar u_{t-h}]
=
\frac{1-\beta_1}{1+\beta_1}\,\beta_1^h.
\end{equation}
Summing over $d=pq$ independent coordinates gives the AdamW line of \eqref{eq:update-norm-main} and both identities in \eqref{eq:update-moment-identities}, with $c_h=\beta_1^h$.

For SVD Muon, $\msign(A)$ has exactly $r=\min(p,q)$ nonzero singular values, all equal to $1$.
Therefore
\begin{equation}
\norm{s_\mu\msign(M_t)}_\F^2
=s_\mu^2 r
=\max(1,p/q)\min(p,q)
=p,
\end{equation}
which gives $U=\sqrt p$.
For Moonlight, the same calculation gives
\begin{equation}
\norm{s_{\mathrm{moon}}\msign(M_t)}_\F^2
=0.04\max(p,q)\min(p,q)
=0.04pq,
\end{equation}
so $U=0.2\sqrt{pq}$.

It remains to identify the autocorrelation.
The stationary momentum matrices satisfy
\begin{equation}
M_t=(1-\beta_1)\sum_{i\ge0}\beta_1^i g_{t-i}.
\end{equation}
Hence each pair $(M_t,M_{t-h})$ is jointly Gaussian, with identical marginal covariance and entrywise correlation $\beta_1^h$ using standard property of geometric sequences.
After dividing both matrices by their common standard deviation, the pair has the same distribution as
\begin{equation}
\bigl(X,\,\beta_1^h X + \sqrt{1-\beta_1^{2h}}\,Z\bigr),
\end{equation}
with $X,Z$ as in \eqref{eq:muon-kernel}.
Because the matrix sign is invariant to positive scalar rescaling, the normalized expected inner product of the two SVD Muon updates is exactly $\kappa_{p,q}(\beta_1^h)$.
Multiplying by $s_\mu^2r=U^2$ or $s_{\mathrm{moon}}^2r=U^2$ gives the autocorrelation identity in \eqref{eq:update-moment-identities}.
The scalar multiplier $s_\mu$ or $s_{\mathrm{moon}}$ cancels in the normalized autocorrelation, so Muon and Moonlight share the same $c_h$.
\end{proof}

\begin{wrapfigure}[13]{r}{0.40\textwidth}
  \vspace{-0.75em}
  \centering
  \includegraphics[width=\linewidth]{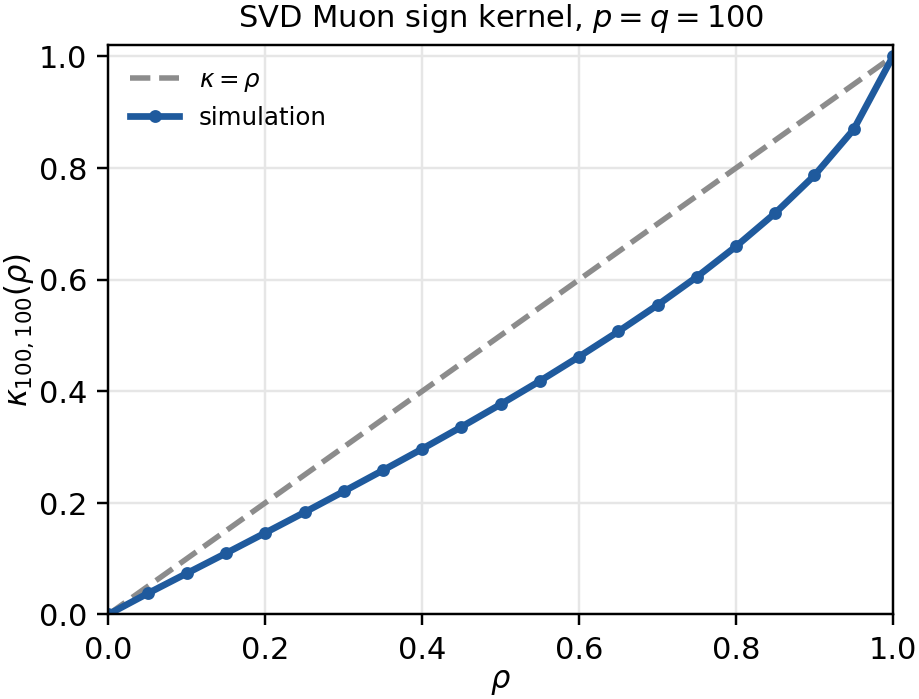}
  \vspace{-1.1em}
  \caption{
    Monte Carlo estimate of $\kappa_{p,q}(\rho)$ for $p=q=100$.
  }
  \label{fig:kappa-kernel-p100}
  \vspace{-0.75em}
\end{wrapfigure}
One question is what the range is for $\kappa_{p,q}$.
Let $F(X):=r^{-1/2}\msign(X)$.
Since $F(-X)=-F(X)$ and $X$ is symmetric, $\E[F(X)]=0$ and $\E\norm{F(X)}_\F^2=1$.
By the Hermite expansion of the Gaussian noise operator,
\begin{equation}
\label{eq:kappa-hermite}
\kappa_{p,q}(\rho)=\sum_{k\ge1} a_k\rho^k,
\qquad
 a_k\ge0,
\qquad
\sum_{k\ge1}a_k=1.
\end{equation}
Therefore, for $0\le\rho\le1$,
\begin{equation}
\label{eq:kappa-bound}
0\le\kappa_{p,q}(\rho)\le\rho.
\end{equation}

This inequality will be used later to show the correlation between weight and update is bounded.
\Cref{fig:kappa-kernel-p100} illustrates this range in a numerical simulation with $p=q=100$.

\paragraph{Weight update correlation.}
We will now consider constant hyperparameter training, the case where we use constant learning rate $\lr$ and constant weight decay $\wdc$.
We denote $1-\lr\wdc$ as $\alpha$, and assume $0<\alpha<1$.

\begin{lemma}[Stationary projection coefficient]
\label{lem:gamma-stationary}
Under \cref{ass:infinite-history,ass:noise} and constant hyperparameter training, let the normalized update autocorrelation sequence $c_h$ be given by \eqref{eq:optimizer-corrs}.
Let 
\begin{equation}
\label{eq:Calpha-def}
C_\alpha := \sum_{h=1}^{\infty}\alpha^{h-1}c_h.
\end{equation}

Define $\gamma_t:=\E\ip{W_t}{u_t}/U^2$ to be the projection coefficient, quantifying how strongly the weight and update correlates, then $\gamma_t$ identically equal to a constant value $\gamma$ for all $t$, with
\begin{equation}
\label{eq:gamma-main}
\gamma=-\lr C_\alpha.
\end{equation}
\end{lemma}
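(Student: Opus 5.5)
Unroll the constant-hyperparameter decoupled update \eqref{eq:wd-update} into the infinite past, as licensed by \cref{ass:infinite-history}. Iterating $W_{t}=\alpha W_{t-1}-\lr u_{t-1}$ gives, formally,
\begin{equation*}
W_t=-\lr\sum_{h\ge1}\alpha^{h-1}u_{t-h},
\end{equation*}
because the term $\alpha^{n}W_{t-n}$ vanishes as $n\to\infty$. This is the meaning of ``the solution obtained by running \eqref{eq:wd-update} from the infinite past.'' Convergence of the series holds in $L^2$. For SVD Muon and Moonlight, $\norm{u_s}_\F=U$ deterministically. For idealized AdamW, $\E\norm{u_s}_\F^2=U^2$ by \cref{lem:update-norm}. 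Since $0<\alpha<1$, we have $\sum_h\alpha^{h-1}(\E\norm{u_{t-h}}_\F^2)^{1/2}=U/(1-\alpha)<\infty$, so the series converges absolutely in $L^2$ by Minkowski. It is also the unique solution with bounded second moment, since the difference of two such solutions is $\alpha^n$ times a bounded quantity.

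Next, take the inner product with $u_t$ and exchange expectation and summation. The exchange is justified by dominated convergence: by Cauchy--Schwarz, $|\E\ip{u_{t-h}}{u_t}|\le U^2$, and $\sum_h\alpha^{h-1}U^2<\infty$. This gives
\begin{equation*}
\E\ip{W_t}{u_t}=-\lr\sum_{h\ge1}\alpha^{h-1}\E\ip{u_t}{u_{t-h}}=-\lr\sum_{h\ge1}\alpha^{h-1}U^2c_h=-\lr U^2C_\alpha,
\end{equation*}
using the lag-$h$ moment identity \eqref{eq:update-moment-identities} from \cref{lem:update-norm}. Dividing by $U^2$ gives $\gamma_t=-\lr C_\alpha$, which does not depend on $t$ because the right-hand side does not. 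The series $C_\alpha$ itself converges: $|c_h|\le1$, either because $c_h=\beta_1^h$ or by \eqref{eq:kappa-bound}, and $\alpha<1$, so $C_\alpha\le 1/(1-\alpha)$.

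The only delicate step is the first one: fixing $W_t$ as the infinite-past series and justifying the interchange of expectation and sum. Stationarity and the moment identities are already supplied by \cref{lem:update-norm}. A subtlety worth checking is that $u_t$ depends on $g_t$ while $W_t$ depends only on $g_{s}$ with $s<t$. The correlation between $W_t$ and $u_t$ therefore arises purely through the shared past gradients inside the momentum, and this is exactly what the autocorrelations $c_h$ encode. No independence between $W_t$ and $u_t$ is needed, and none is assumed.
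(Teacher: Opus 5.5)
Your proposal is correct and follows essentially the same argument as the paper: unroll \eqref{eq:wd-update} into $W_t=-\lr\sum_{h\ge1}\alpha^{h-1}u_{t-h}$, pair it with $u_t$, and apply the lag-$h$ identity from \cref{lem:update-norm}. The $L^2$ convergence, uniqueness, and sum--expectation interchange arguments you add are rigor the paper leaves implicit. Your $L^2$ convergence plus continuity of $X\mapsto\E\ip{X}{u_t}$ already justifies the interchange; a Fubini-style argument would instead need $\E|\ip{u_{t-h}}{u_t}|\le U^2$ (which the same Cauchy--Schwarz step gives), not just a bound on $|\E\ip{u_{t-h}}{u_t}|$.
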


\begin{proof}
Since $0<\alpha<1$, the stationary solution of the decoupled weight decay recursion \eqref{eq:wd-update} is
\begin{equation}
\label{eq:stationary-w-unroll}
W_t=-\lr\sum_{h=1}^{\infty}\alpha^{h-1}u_{t-h}.
\end{equation}
Taking the expectation of the inner product with $u_t$ and using \eqref{eq:update-moment-identities} gives
\begin{equation}
\E\ip{W_t}{u_t}
=
-\lr\sum_{h=1}^{\infty}\alpha^{h-1}\E\ip{u_{t-h}}{u_t}
=
-\lr U^2\sum_{h=1}^{\infty}\alpha^{h-1}c_h.
\end{equation}
Dividing by $U^2$ and using the definition of $C_\alpha$ in \eqref{eq:Calpha-def} proves \eqref{eq:gamma-main}.
\end{proof}

For AdamW, $c_h=\beta_1^h$, and the sum can be simplified,
\begin{equation}
\label{eq:Calpha-adam}
\begin{aligned}
C_\alpha^{\mathrm{Adam}}
&=
\sum_{h=1}^{\infty}\alpha^{h-1}\beta_1^h \\
&=
\beta_1\sum_{h=1}^{\infty}(\alpha\beta_1)^{h-1}
=
\frac{\beta_1}{1-\alpha\beta_1}.
\end{aligned}
\end{equation}
For SVD Muon and Moonlight,
\begin{equation}
\label{eq:Calpha-muon}
C_\alpha^{\mathrm{Muon}}=C_\alpha^{\mathrm{Moonlight}}
=\sum_{h=1}^{\infty}\alpha^{h-1}\kappa_{p,q}(\beta_1^h).
\end{equation}
The series is finite because \eqref{eq:kappa-bound} gives $c_h\le\beta_1^h$.
The negative sign means that, in stationarity, the current update is negatively correlated with the current weight.
The size of this negative correlation is controlled by $C_\alpha$.
Substituting \eqref{eq:Calpha-adam} into \eqref{eq:gamma-main} recovers the familiar AdamW expression
\begin{equation}
\gamma
=
-\lr C_\alpha^{\mathrm{Adam}}
=
-\lr\frac{\beta_1}{1-\alpha\beta_1}
=
-\frac{\lr\beta_1}{1-\alpha\beta_1}.
\end{equation}

\paragraph{Equilibrium weight norm.}
Let stationary weight norm $S_t:=\E\norm{W_t}_\F^2$ and let $U^2$ be the update second moment in \eqref{eq:update-moment-identities}.
Squaring \eqref{eq:wd-update}, taking expectations, and substituting $\E\ip{W_t}{u_t}=\gamma U^2$ gives the following equality:
\begin{equation}
\label{eq:R-recursion-main}
S_{t+1}
=
\alpha^2 S_t+\lr^2U^2-2\alpha\lr\gamma U^2.
\end{equation}
At stationarity, $S_{t+1}=S_t=R_\star^2$, and \eqref{eq:gamma-main} gives
\begin{equation}
\label{eq:Rstar}
R_\star
=
\lr U\sqrt{\frac{1+2\alpha C_\alpha}{1-\alpha^2}}.
\end{equation}
For AdamW, \eqref{eq:Calpha-adam} reduces this to the previous closed form
\begin{equation}
\label{eq:Rstar-adam}
R_\star^{\mathrm{Adam}}
=
\lr U\sqrt{\frac{1+\alpha\beta_1}{(1-\alpha^2)(1-\alpha\beta_1)}}.
\end{equation}
For SVD Muon and Moonlight, the correct formula is instead \eqref{eq:Rstar} with $C_\alpha$ from \eqref{eq:Calpha-muon}.
In all cases, when $\lr\wdc$ is small and $\beta_1$ is fixed, $R_\star=\Theta(U\sqrt{\lr/\wdc})$ up to the autocorrelation factor $\sqrt{1+2\alpha C_\alpha}$.
Closely related estimates for AdamW update and weight RMS based on mean field approximation appear in \citet{su2025adamupdate,su2025adamwrms1,su2025adamwrms2}.

\begin{corollary}[Cosine and angular step at equilibrium]
\label{cor:cos-eq}
Under \cref{ass:infinite-history,ass:noise} and constant hyperparameter training, the following cosine proxy between the weight and update $\cos_t:=\frac{\E\ip{W_t}{u_t}}{R_\star U}$ identically equal to a constant value $\cos_\star$ for all $t$, with 
\begin{equation}
\label{eq:cos-main}
\cos_\star = -C_\alpha\sqrt{\frac{1-\alpha^2}{1+2\alpha C_\alpha}}.
\end{equation}
The corresponding ansatz angular step size at equilibrium is
\begin{equation}
\label{eq:etaang-general}
(\etaang)^2
=
\frac{2(1-\alpha)\bigl(1-(1-\alpha)C_\alpha\bigr)}{1+2\alpha C_\alpha}.
\end{equation}
For AdamW, this becomes
\begin{equation}
\label{eq:etaang_formula}
{\etaang}
=
\sqrt{\frac{2(1-\alpha)(1-\beta_1)}{1+\alpha\beta_1}}
=
\sqrt{\frac{2\lr\wdc(1-\beta_1)}{1+(1-\lr\wdc)\beta_1}}.
\end{equation}
\end{corollary}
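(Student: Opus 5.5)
The plan is to combine the stationary quantities already computed: $\E\ip{W_t}{u_t}=\gamma U^2$ with $\gamma=-\lr C_\alpha$ from \cref{lem:gamma-stationary}, and the equilibrium radius $R_\star^2=\lr^2U^2(1+2\alpha C_\alpha)/(1-\alpha^2)$ from \eqref{eq:Rstar}. Both are time independent under \cref{ass:infinite-history,ass:noise}, so $\cos_t$ is constant.

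\textbf{Cosine.} Substituting directly gives
\[
\cos_\star=\frac{-\lr C_\alpha U^2}{R_\star U}=-C_\alpha\sqrt{\frac{1-\alpha^2}{1+2\alpha C_\alpha}},
\]
since the factors $\lr U$ cancel. This is pure algebra.

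\textbf{Angular step.} The content here is in fixing the ansatz. Since $\widehat W$ lies on the unit sphere, $(\etaang_t)^2=2-2\ip{\widehat W_{t+1}}{\widehat W_t}$. The ansatz is to replace the ratio of inner product to norms by the ratio of second moments, using $\E\norm{W_{t+1}}_\F^2=\E\norm{W_t}_\F^2=R_\star^2$. This gives
\[
\ip{\widehat W_{t+1}}{\widehat W_t}\approx \E\ip{W_{t+1}}{W_t}/R_\star^2 .
\]
From \eqref{eq:wd-update},
\[
\E\ip{W_{t+1}}{W_t}=\alpha R_\star^2-\lr\gamma U^2=\alpha R_\star^2+\lr^2C_\alpha U^2.
\]
Dividing by $R_\star^2$, the cosine of consecutive directions is
\[
\alpha+\frac{C_\alpha(1-\alpha^2)}{1+2\alpha C_\alpha}.
\]
Then
\[
(\etaang)^2=\frac{2\bigl[(1-\alpha)(1+2\alpha C_\alpha)-C_\alpha(1-\alpha)(1+\alpha)\bigr]}{1+2\alpha C_\alpha}.
\]
Factoring out $(1-\alpha)$ leaves $1+2\alpha C_\alpha-C_\alpha-\alpha C_\alpha=1-(1-\alpha)C_\alpha$, which is \eqref{eq:etaang-general}.

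\textbf{AdamW specialization.} Plug in $C_\alpha=\beta_1/(1-\alpha\beta_1)$ from \eqref{eq:Calpha-adam}. This gives
\[
1-(1-\alpha)C_\alpha=\frac{1-\beta_1}{1-\alpha\beta_1},
\qquad
1+2\alpha C_\alpha=\frac{1+\alpha\beta_1}{1-\alpha\beta_1}.
\]
The common factor $1-\alpha\beta_1$ cancels, giving the first expression in \eqref{eq:etaang_formula}. Substituting $1-\alpha=\lr\wdc$ gives the second.

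\textbf{Main obstacle.} The main point to handle carefully is stating the ansatz honestly. The exact $\etaang_t$ is a random variable, and the expectation of a ratio is not the ratio of expectations. So I will state explicitly that the "ansatz angular step" is defined by the second-moment substitution above, the same way $\cos_t$ is defined as a proxy. Everything else is routine algebra.
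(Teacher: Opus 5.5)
Your proposal is correct and follows the paper's proof. The cosine is the same direct substitution. For the angular step, the paper plugs $\E\ip{W_t}{u_t}=\gamma U^2$ and $R_t=R_\star$ into \eqref{eq:direction-update} and notes the denominator equals $1$ by the stationary radius equation; this is exactly your replacement of $\ip{\widehat W_{t+1}}{\widehat W_t}$ by $\E\ip{W_{t+1}}{W_t}/R_\star^2$ using $\E\norm{W_{t+1}}_\F^2=R_\star^2$. The remaining algebra and the AdamW specialization are identical.
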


\begin{proof}
The cosine formula follows from $\E\ip{W_t}{u_t}=\gamma U^2$, $\gamma=-\lr C_\alpha$, and \eqref{eq:Rstar}:
\begin{equation}
\cos_\star=\frac{\gamma U}{R_\star}
=-C_\alpha\sqrt{\frac{1-\alpha^2}{1+2\alpha C_\alpha}}.
\end{equation}
For the angular step, plug $\E\ip{W_t}{u_t}=\gamma U^2$ and $R_t=R_\star$ into \eqref{eq:direction-update}, and set $k_\star:=U/R_\star$.
This gives
\begin{equation}
\ip{\widehat W_{t+1}}{\widehat W_t}
=
\frac{\alpha-\lr\gamma k_\star^2}
{\sqrt{\alpha^2-2\alpha\lr\gamma k_\star^2+\lr^2k_\star^2}}.
\end{equation}
At equilibrium, the denominator equals $1$ by the stationary radius equation.
Using $k_\star^2=(1-\alpha^2)/(\lr^2(1+2\alpha C_\alpha))$ and $\gamma=-\lr C_\alpha$ gives
\begin{equation}
(\etaang)^2
=2-2\left(\alpha+\frac{C_\alpha(1-\alpha^2)}{1+2\alpha C_\alpha}\right)
=\frac{2(1-\alpha)\bigl(1-(1-\alpha)C_\alpha\bigr)}{1+2\alpha C_\alpha}.
\end{equation}
Substituting $C_\alpha=\beta_1/(1-\alpha\beta_1)$ gives \eqref{eq:etaang_formula}.
\end{proof}

\begin{theorem}[Weight decay sets equilibrium radius and angular step size]
\label{prop:etaang_stable}
Under \cref{ass:infinite-history,ass:noise} and constant hyperparameter training with $\alpha=1-\lr\wdc$, the weight in the decoupled weight decay update \eqref{eq:wd-update} will converge to the equilibrium radius \eqref{eq:Rstar}.
At this equilibrium, the angular step size is given by \eqref{eq:etaang-general}, with the AdamW specialization \eqref{eq:etaang_formula}.
The base optimizer enters through two quantities only: the update norm $U$ and the autocorrelation sum $C_\alpha$.
\end{theorem}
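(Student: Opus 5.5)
The plan is to read the theorem as a packaging of \cref{lem:update-norm}, \cref{lem:gamma-stationary} and \cref{cor:cos-eq}, with one extra ingredient: the word ``converge''. I would first make the convergence claim precise at the level of the second moment $S_t=\E\norm{W_t}_\F^2$, since that is the quantity the equilibrium radius \eqref{eq:Rstar} describes. The argument has two parts: a stationarity statement under \cref{ass:infinite-history}, and a contraction statement for trajectories started from an arbitrary $W_0$.

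\emph{Stationary part.} Under \cref{ass:infinite-history}, $W_t$ is given by the unrolled series \eqref{eq:stationary-w-unroll}. I would compute $\E\norm{W_t}_\F^2$ directly from the double sum
\begin{equation*}
\lr^2\sum_{h,h'\ge1}\alpha^{h-1}\alpha^{h'-1}\E\ip{u_{t-h}}{u_{t-h'}}.
\end{equation*}
By \cref{lem:update-norm} each term equals $U^2c_{|h-h'|}$. Splitting into the diagonal and off-diagonal parts gives
\begin{equation*}
\frac{\lr^2U^2}{1-\alpha^2}\Bigl(1+2\sum_{k\ge1}\alpha^k c_k\Bigr)
=\frac{\lr^2U^2\,(1+2\alpha C_\alpha)}{1-\alpha^2},
\end{equation*}
which is exactly $R_\star^2$ from \eqref{eq:Rstar}. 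Absolute convergence, needed to justify exchanging sums and expectations, follows from $|c_k|\le\beta_1^k$ via \eqref{eq:kappa-bound} together with $\alpha<1$. So $S_t\equiv R_\star^2$ for every $t$, consistent with the fixed point of \eqref{eq:R-recursion-main}.

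\emph{Convergence part.} For a trajectory started at a finite time $t_0$ from any $W_{t_0}$ (with optimizer state still stationary), I would write
\begin{equation*}
W_t=\alpha^{t-t_0}W_{t_0}-\lr\sum_{h=1}^{t-t_0}\alpha^{h-1}u_{t-h},
\end{equation*}
so $W_t-W_t^{\mathrm{stat}}=\alpha^{t-t_0}\bigl(W_{t_0}-W_{t_0}^{\mathrm{stat}}\bigr)$. This difference decays geometrically, and hence $\E\norm{W_t}_\F^2\to R_\star^2$ by Cauchy--Schwarz. This route avoids analysing \eqref{eq:R-recursion-main} off-stationarity, where $\gamma_t$ need not be constant.

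\emph{Angular step and the role of $(U,C_\alpha)$.} The angular step at equilibrium is then quoted from \cref{cor:cos-eq}: \eqref{eq:etaang-general} in general, and \eqref{eq:etaang_formula} for AdamW via \eqref{eq:Calpha-adam}. For the final sentence I would note that $R_\star$, $\cos_\star$ and $\etaang$ depend on the base optimizer only through $U$ and $C_\alpha$, and $\etaang$ only through $C_\alpha$.

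The main obstacle is interpretive. The ``angular step'' in \cref{cor:cos-eq} is a deterministic ansatz that substitutes expectations into the nonlinear map \eqref{eq:direction-update}; it is not a genuine expectation of $\norm{\widehat W_{t+1}-\widehat W_t}_\F$. I would state explicitly that the theorem asserts this second-moment ansatz, and not attempt concentration arguments (e.g.\ large-$d$ concentration of $\norm{W_t}_\F^2$) unless needed.
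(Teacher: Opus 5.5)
Your proposal is correct, but it gets the radius by a different route than the paper.

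The paper's proof is a one-line composition:
\cref{lem:gamma-stationary} supplies $\E\ip{W_t}{u_t}=\gamma U^2$ with $\gamma=-\lr C_\alpha$. This is substituted into the squared recursion \eqref{eq:R-recursion-main}, and the fixed point $S_{t+1}=S_t$ is solved for $R_\star$. The angular step is then quoted from \cref{cor:cos-eq}, exactly as you do.

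You differ in two ways.
\begin{itemize}
\item You compute $\E\norm{W_t}_\F^2$ directly as $\lr^2U^2\sum_{h,h'\ge1}\alpha^{h+h'-2}c_{|h-h'|}$. This never passes through $\gamma$ or the recursion, and it does not need to posit $S_{t+1}=S_t$.
\item You add a real convergence argument. By linearity of \eqref{eq:wd-update}, a trajectory started at finite $t_0$ differs from the stationary one by $\alpha^{t-t_0}\bigl(W_{t_0}-W^{\mathrm{stat}}_{t_0}\bigr)$, so its second moment tends to $R_\star^2$.
\end{itemize}

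The paper's proof treats the word ``converge'' only implicitly. Under \cref{ass:infinite-history} the process is already stationary. Off stationarity, as you note, $\gamma_t$ need not equal $-\lr C_\alpha$, so the constant-coefficient recursion with contraction factor $\alpha^2$ is heuristic there.

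What each route buys: the paper's is shorter and reuses $\gamma$, which it needs anyway for $\cos_\star$ and $\etaang$. Yours is self-contained for the radius and makes the convergence claim precise in mean square.

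Your remark that $\etaang$ depends on the base optimizer only through $C_\alpha$ (not $U$) is a correct sharpening of the theorem's last sentence. Your caveat that \eqref{eq:etaang-general} is a mean-field ansatz rather than an exact expectation matches the paper's own framing in \cref{cor:cos-eq}.

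One small simplification: summability of your double sum needs only $|c_k|\le1$, which follows from Cauchy--Schwarz, together with $\alpha<1$. You do not need \eqref{eq:kappa-bound}.
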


\begin{proof}
By \cref{lem:gamma-stationary}, $\E\ip{W_t}{u_t}=\gamma U^2$ with $\gamma=-\lr C_\alpha$.
The radial recursion \eqref{eq:R-recursion-main} then gives \eqref{eq:Rstar}, and \cref{cor:cos-eq} gives the angular step size.
\end{proof}

Under \cref{ass:infinite-history,ass:noise}, the optimizer enters through the identities in \eqref{eq:update-moment-identities}.
Once $U$ and $c_h$ are fixed, the weight decay recursion determines the stationary radius, cosine proxy, and angular step algebraically.

\Cref{tab:steady-state} consolidates the stationary quantities.
The update norm $U$ controls the equilibrium radius.
The autocorrelation sum $C_\alpha$ controls the momentum-induced radial correction, the cosine, and the angular step.
AdamW has $C_\alpha=\beta_1/(1-\alpha\beta_1)$, whereas SVD Muon and Moonlight use the matrix sign kernel in \eqref{eq:Calpha-muon}.

\begin{table}[t]
\centering
\small
\setlength{\tabcolsep}{6pt}
\renewcommand{\arraystretch}{1.5}
\begin{tabular}{@{\hspace{5pt}}l@{\hspace{8pt}}|@{\hspace{8pt}}ccc@{\hspace{5pt}}}
\toprule
Quantity & AdamW & SVD Muon & Moonlight SVD Muon \\
\midrule
Update norm $U$
  & $\sqrt{\tfrac{1-\beta_1}{1+\beta_1}}\,\sqrt{d_{\mathrm{in}}d_{\mathrm{out}}}$
  & $\sqrt{d_{\mathrm{out}}}$
  & $0.2\sqrt{d_{\mathrm{in}}d_{\mathrm{out}}}$ \\
Autocorr. sum $C_\alpha$
  & $\dfrac{\beta_1}{1-\alpha\beta_1}$
  & $\sum_{h\ge1}\alpha^{h-1}\kappa_{p,q}(\beta_1^h)$
  & $\sum_{h\ge1}\alpha^{h-1}\kappa_{p,q}(\beta_1^h)$ \\
Equilibrium norm $R_\star$
  & \multicolumn{3}{c}{$\lr U\sqrt{\dfrac{1+2\alpha C_\alpha}{1-\alpha^2}}$} \\
Cosine proxy $\cos_\star$
  & \multicolumn{3}{c}{$-C_\alpha\sqrt{\dfrac{1-\alpha^2}{1+2\alpha C_\alpha}}$} \\
Angular step size $(\etaang)^2$
  & \multicolumn{3}{c}{$\dfrac{2(1-\alpha)\bigl(1-(1-\alpha)C_\alpha\bigr)}{1+2\alpha C_\alpha}$} \\
\bottomrule
\end{tabular}
\caption{Steady state ansatz quantities for $\alpha:=1-\lr\wdc$, with $C_\alpha$ defined in \eqref{eq:Calpha-def}. AdamW uses the raw momentum autocorrelation. SVD Muon and Moonlight use the exact matrix sign autocorrelation kernel $\kappa_{p,q}$ defined in \eqref{eq:muon-kernel}, and therefore need not have the same angular step as AdamW.}
\label{tab:steady-state}
\end{table}

\begin{lemma}[Inverse gradient scaling for scale invariant losses]
\label{lem:gradient-scale}
Independently of \cref{ass:infinite-history,ass:noise}, if $L:\R^{d_{\mathrm{out}}\times d_{\mathrm{in}}}\to\R$ is differentiable and scale invariant, $L(cW)=L(W)$ for all $c>0$, then
\begin{equation}
\label{eq:grad-inverse}
\nabla_W L(cW)=\frac{1}{c}\nabla_W L(W)
\qquad\text{for all }c>0.
\end{equation}
\end{lemma}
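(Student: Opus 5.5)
The plan is to differentiate the identity $L(cW)=L(W)$ with respect to $W$ and apply the chain rule. Fix $c>0$ and define $\phi_c(W):=cW$, which is a linear map on $\R^{d_{\mathrm{out}}\times d_{\mathrm{in}}}$. Scale invariance says that $L\circ\phi_c = L$ as functions of $W$. Since $L$ is differentiable everywhere, the composition is differentiable, and its differential at $W$ acts on a direction $H$ by
\[
D(L\circ\phi_c)(W)[H] = DL(cW)[cH] = c\,\ip{\nabla_W L(cW)}{H}.
\]
The first equality holds because the differential of the linear map $\phi_c$ is $H\mapsto cH$.

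The right-hand side $L$ has differential $\ip{\nabla_W L(W)}{H}$. Equating the two differentials for every direction $H$ gives $c\,\nabla_W L(cW)=\nabla_W L(W)$. Dividing by $c>0$ yields \eqref{eq:grad-inverse}.

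The one point needing care is notation. Here $\nabla_W L(cW)$ means the gradient of $L$ evaluated at the point $cW$, not the gradient of the map $W\mapsto L(cW)$. I will state this explicitly before applying the chain rule, since conflating the two would give the trivial identity instead. No assumption about the optimizer or the noise model is used, which matches the remark that the lemma holds independently of \cref{ass:infinite-history,ass:noise}.

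If it is useful later, I will also note two consequences that follow from the same argument. Differentiating $c\mapsto L(cW)$ at $c=1$ gives $\ip{\nabla L(W)}{W}=0$, so gradients are tangent to the sphere. Combined with \eqref{eq:grad-inverse}, this shows that gradient size scales like $1/\norm{W}_\F$.
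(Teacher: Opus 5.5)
Your proposal is correct and is essentially the paper's argument. The paper writes scale invariance as $L(cW+c\epsilon)=L(W+\epsilon)$ and differentiates in $\epsilon$ at $\epsilon=0$, which is exactly your chain-rule identity $DL(cW)[cH]=DL(W)[H]$ for the linear map $W\mapsto cW$, and both arrive at $c\,\nabla_W L(cW)=\nabla_W L(W)$.
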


\begin{proof}
Scale invariance gives $L(cW+c\epsilon)=L(W+\epsilon)$ for every matrix $\epsilon$.
Differentiating both sides with respect to $\epsilon$ at $\epsilon=0$ gives
\begin{equation}
\ip{\nabla_W L(cW)}{c\epsilon}=\ip{\nabla_W L(W)}{\epsilon}
\qquad\forall\epsilon,
\end{equation}
which forces $c\nabla_W L(cW)=\nabla_W L(W)$.
\end{proof}

Applying \cref{lem:gradient-scale} to $W_t=R_t\widehat W_t$ gives
\begin{equation}
\label{eq:grad-inverse-layer}
\norm{\nabla_W L(W_t)}_\F
=
\frac{1}{R_t}\norm{\nabla_W L(\widehat W_t)}_\F.
\end{equation}
Combined with $R_\star\propto\sqrt{\lr/\wdc}$ from \eqref{eq:Rstar}, this gives the scale invariant prediction $\norm{\nabla_W L(W_t)}_\F\propto\sqrt{\wdc/\lr_t}$ when the direction of the weight is fixed and the autocorrelation factor changes slowly.

\paragraph{Interpretation.}
The main message of the theory is that, for scale invariant matrix blocks, decoupled weight decay should be understood as an indirect controller of angular optimization speed rather than merely as a regularizer.
The preceding results make this mechanism explicit:

\begin{enumerate}
    \item For a scale invariant loss, the relevant optimization variable is the direction $\widehat W=W/\norm{W}_\F$, and the effective angular step is controlled by the angular learning rate $\etaang_t := \norm{\widehat W_{t+1}-\widehat W_t}_\F$.

    \item \Cref{lem:update-norm} shows that for common optimizer the update norm $U$ and the correlation for update between steps converge to constants that only depend on optimizer choices and hyperparameters.  

    \item \Cref{lem:gamma-stationary,prop:etaang_stable} show that (1) weight converges to an equilibrium radius $R_\star$ and (2) the angular learning rate converge to a constant $\etaang$ and both constant only depend on optimizer choices and hyperparameters.  

    \item \Cref{lem:gradient-scale} shows that, for scale invariant losses, larger equilibrium radius imply smaller gradient norms, giving the prediction $\norm{\nabla_W L(W_t)}_\F\propto 1/R_t$ when the direction remains unchange.
\end{enumerate}

Thus, weight decay has two coupled effects: it fixes the radial scale $R_\star$, and that radial scale determines the angular learning speed through $\norm{u_t}_\F/R_\star$.
This is the mechanism that Hyperball makes explicit: instead of letting weight decay indirectly determine both the matrix norm and the relative update length, Hyperball fixes the norm and the normalized update length directly.

\subsection{Empirical Validation}
\label{sec:theory-validation}

\paragraph{Phenomenon 1: weight norm tracks learning rate warmup and decay throughout training.}
Under a WSD learning rate schedule, \eqref{eq:Rstar} predicts that $R_t$ should rise during warmup and shrink during learning rate decay.
In a $1.2$B AdamW run with cosine learning rate decay, Q/K/V projection norms across layers show exactly this pattern: norms rise rapidly during warmup and then decrease during decay (\cref{fig:wd-qkv-diagnostics}, top).

\paragraph{Phenomenon 2: gradient norm increases through training.}
For scale invariant blocks, \eqref{eq:grad-inverse-layer} predicts that gradient norms scale approximately as $1/R_t$.
This phenomenon is also studied in \citet{defazio2025gradients}, where a similar explanation is provided.
In the same run, the corresponding Q/K/V gradient norms increase late in training as the weight norms shrink during learning rate decay (\cref{fig:wd-qkv-diagnostics}, bottom).

\begin{figure}[t]
  \centering
  \includegraphics[width=\linewidth]{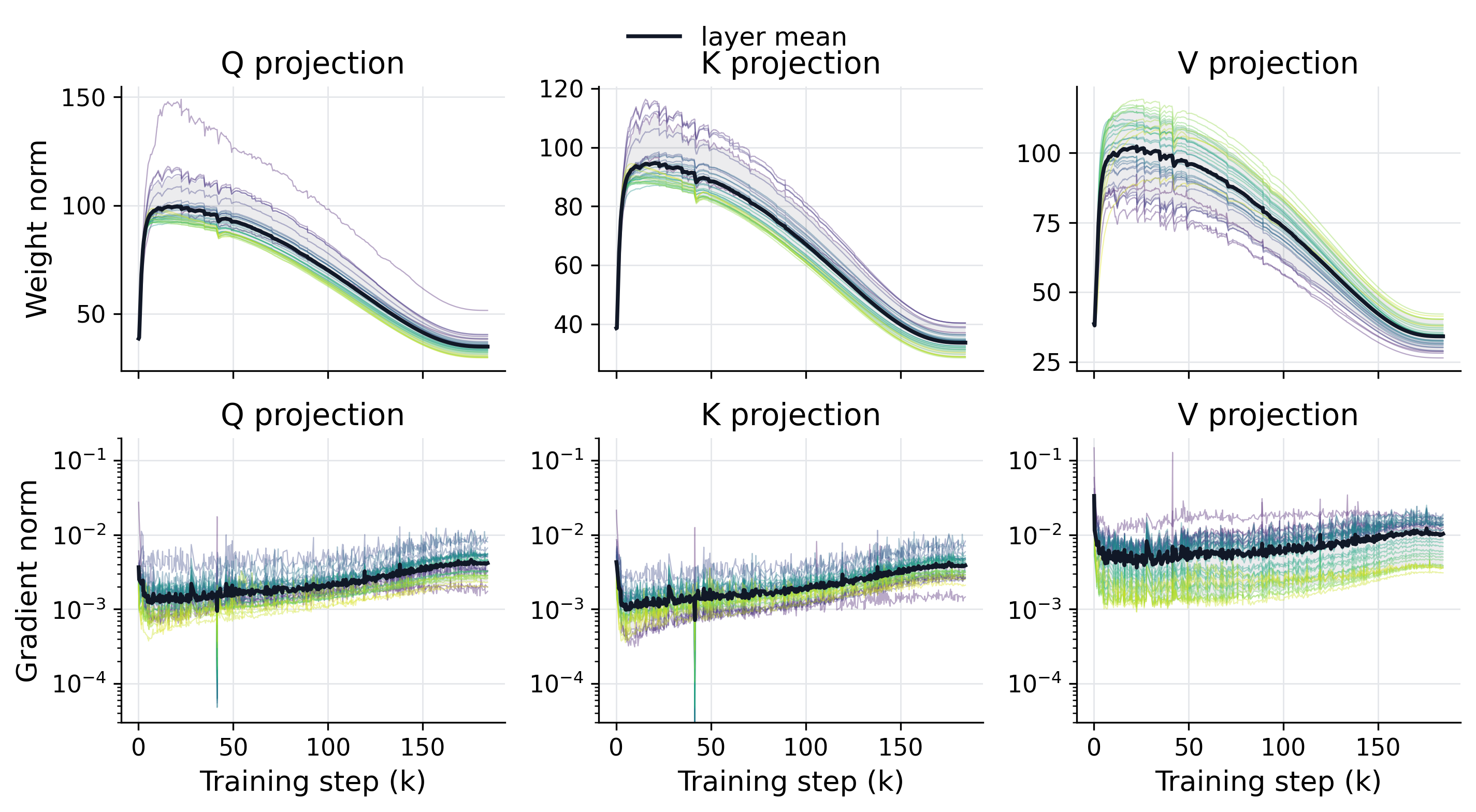}
  \caption{
    Weight norm and gradient norm diagnostics for a $1.2$B AdamW run.
    Top: Q/K/V projection weight norms across all $32$ layers.
    Bottom: the corresponding gradient norms.
    Thin curves are individual layers. The dark curve is the layer mean.
    Weight norms follow the learning rate schedule, and gradient norms rise as weight norms shrink during decay.
  }
  \label{fig:wd-qkv-diagnostics}
\end{figure}

\paragraph{Phenomenon 3: when $\lr\wdc$ is fixed, AdamW converges to essentially the same loss while each matrix norm is roughly proportional to $\lr$.}
\Cref{prop:etaang_stable} predicts that holding $\lr\wdc$ fixed keeps the angular step size $\etaang$ nearly fixed, so the training loss should be nearly unchanged.
Furthermore, if we divide $\wdc$ by $c$ and multiply $\lr$ by $c$, then \eqref{eq:Rstar} predicts $R_\star \propto c$.
In \cref{fig:wd-fixed-product-ablation}, we present two runs with $(\lr,\wdc)=(0.002,0.2)$ and $(0.004,0.1)$, and we observe that the train loss curves nearly overlap, while the equilibrium Q/K norms are roughly doubled in the larger learning rate run.

\begin{figure}[t]
  \centering
  \includegraphics[width=\linewidth]{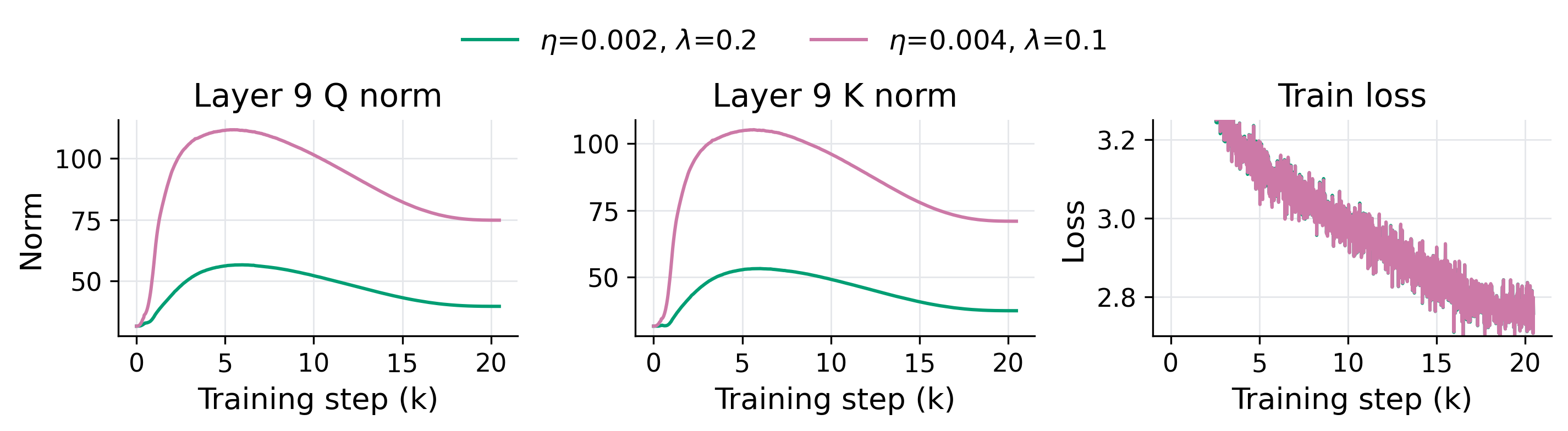}
  \caption{
    Ablation with fixed $\eta \lambda$.
    Two AdamW runs with the same product $\lr\wdc=4\cdot 10^{-4}$ have nearly identical train loss, while the larger learning rate run has roughly doubled layer 9 Q/K norms.
  }
  \label{fig:wd-fixed-product-ablation}
\end{figure}

\paragraph{Phenomenon 4: despite sharing the same learning rate schedule, weight decay starts with a higher loss but ultimately converges lower than no weight decay.}
When the WD and no WD runs use the same learning rate warmup and decay schedule, \eqref{eq:Rstar} and \eqref{eq:etaang_formula} predict different weight norms and angular step dynamics.
Without WD, the weight norm grows and the angular proxy $\lr\norm{u_t}_\F/\norm{W_t}_\F$ decays. With WD, the run maintains a larger effective step size throughout training.
Empirically, and in the theory in \cref{sec:theory-optimization}, training with WD yields a larger effective step size than training without WD.
According to the river valley theory~\citep{wen2024river}, the loss decomposes into a ``river'' component, capturing progress along a relatively flat direction where long term optimization happens, and a ``hill'' component, capturing excursions in steep directions caused by stochastic gradients.
A larger effective step size amplifies these hill direction oscillations, which raises the observed loss early in training, but it also accelerates motion along the river.
When the learning rate decays, the oscillations in the hill directions shrink and the iterate settles closer to the riverbed, revealing the additional progress that has already been made along the river.
This theory agrees with the phenomenon we observed here. The WD run starts with a higher loss but ultimately reaches a lower loss, because its larger effective step size allows it to move faster down the river before the decay phase suppresses the oscillations (\cref{fig:wd-vs-nowd-ablation}).

\begin{figure}[H]
  \centering
  \includegraphics[width=\linewidth]{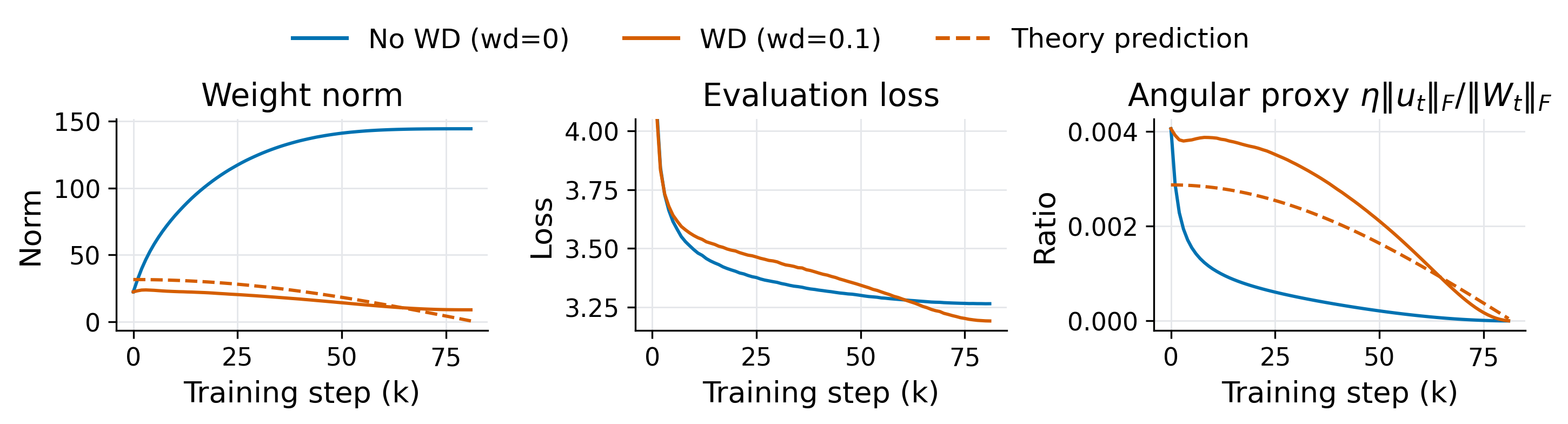}
  \caption{
    Weight decay versus no weight decay under the same learning rate schedule.
    Without weight decay, the weight norm grows and the angular proxy decays.
    With weight decay, the run maintains a larger late angular proxy and crosses over to a lower validation loss.
    The dashed curve is the theory prediction for the WD angular proxy.
  }
  \label{fig:wd-vs-nowd-ablation}
\end{figure}

\paragraph{Phenomenon 5: contrary to the original $\mu$P prediction, transfer is not sensitive to weight scale at initialization but is sensitive to weight decay scaling.}
Recent hyperparameter transfer studies find that transfer is often less sensitive to the initial weight scale than to how weight decay is scaled across model size and training duration~\citep{kosson2025weightdecay,blake2024umup,fan2025robust,wang2024adamw,qiu2025hyperparameter}.
This is consistent with \eqref{eq:Rstar}: the radial recursion forgets the initial radius and converges to a norm set by $\lr$, $\wdc$, and the optimizer dependent update norm $U$.
Changing the scaling rule for $\wdc$, however, changes both the equilibrium norm $R_\star$ and the angular step size in \eqref{eq:etaang_formula}, so it changes the dynamics relevant to transfer that are assumed by $\mu$P style analyses~\citep{yang2022tensorprograms}.
Hyperball turns this dependence into an explicit design choice by fixing the radius and normalized update length directly, which is why the same learning rate window transfers better across depths and widths in \cref{sec:experiments_transfer}.

\FloatBarrier

\section{Related work}
\label{sec:related}

\paragraph{Weight decay in normalized networks.}
Earlier work showed that, in normalized networks, weight decay often changes optimization dynamics or effective learning rates rather than acting as a classical capacity penalty~\citep{vanlaarhoven2017l2,zhang2019three,hoffer2018norm,dangelo2023why}.
A line of work argues that, in the presence of BatchNorm~\citep{ioffe2015batch} or LayerNorm~\citep{ba2016layer}, weight decay acts through norm dynamics: it sets an equilibrium weight norm and, jointly with the learning rate, an angular step size~\citep{li2020reconciling,roburin2020spherical,kosson2024rotational}.
\citet{yang2023spectral} formalize the role of the relative update size in feature learning at scale via a spectral condition.
Our analysis of \cref{sec:mechanism} sits squarely in this picture, closest in spirit to the rotational equilibrium framework of \citet{kosson2024rotational}. Hyperball is the matrix level wrapper that pins the relevant ratio directly rather than letting it equilibrate.

\paragraph{Norm constraints on weights and updates.}
Decoupling weight magnitude from direction has a long history.
Weight Normalization~\citep{salimans2016weight} reparameterizes $W=g\,V/\norm{V}$. Weight Standardization and BiT~\citep{qiao2019weight,kolesnikov2020big} standardize kernel statistics. Convolutional Normalization~\citep{liu2021convolutional} reduces per layer spectral norm. Decoupled Networks~\citep{liu2018decoupled} split feature norm from angle. Artificial Kuramoto Oscillatory Neurons~\citep{miyato2024akorn} use unit norm oscillator states.
On the update side, AdamP and SGDP~\citep{heo2021adamp} project updates onto the tangent space of the weight direction, Lion~\citep{chen2023lion} fixes the per entry update magnitude via the sign function, and LionAR normalizes early update sizes using angular criteria~\citep{kosson2024warmup}.
For generative models, EDM2~\citep{karras2023edm2} normalizes column weights and Spectral Normalization~\citep{miyato2018spectral} bounds the operator norm.
Hyperball differs in being an \emph{optimizer wrapper} that simultaneously fixes the matrix Frobenius norm and the update Frobenius norm, exposing the directional step size as a designed quantity.

\paragraph{Fixed norms in LLM pretraining and manifold optimization.}
Several recent and concurrent works enforce normalization at the architecture or optimizer level for language model pretraining.
nGPT~\citep{loshchilov2024ngpt} enforces columnwise unit norms with adaptive normalization layers. Nemotron-Flash~\citep{fu2025nemotron} applies per channel spherical constraints for inference time benefits but not on updates. The approximately normalized Transformer (anGPT)~\citep{franke2025compact} bounds each weight row using constrained parameter regularization~\citep{franke2024constrained}. \citet{owen2025variance} periodically rescale weights toward a target variance.
On richer manifolds, Modular Manifolds~\citep{modular-manifolds}, Muon$+$Stiefel~\citep{muon-stiefel}, notes on orthogonal manifolds and steepest descent~\citep{orthogonal-manifold,cesista2025stiefel}, and \citet{newhouse2025lipschitz} optimize on the Stiefel or spectral sphere manifold.
Related spectral norm views of Muon and weight decay appear in \citet{su2024spectral,chen2025muonspectral}, while SSO~\citep{xie2026sso} performs steepest descent or projection to the spectral sphere.
Hyperball projects onto the matrix Frobenius sphere $S^{d_{\mathrm{in}}d_{\mathrm{out}}-1}$---a softer constraint than normalization by column or channel---at $O(N^2)$ cost per matrix, versus $O(N^3)$ for spectral projections.

\section{Conclusion}
\label{sec:conclusion}

Hyperball replaces the implicit norm control of weight decay with an explicit optimizer constraint on matrix norms and update norms.
Across our experiments, the explicit constraint improves the scaling behavior of matrix based optimizers and makes learning rate transfer more reliable across model widths, depths, and training budgets.

A broader question is which constraint should be imposed.
The Frobenius norm is computationally cheap and theoretically motivated by the mechanism studied here, but spectral, rowwise, columnwise, hybrid, or architecture dependent constraints may better match some models and optimizers.
Another direction is to develop a sharper theory of weight normalized training, including Weight Normalization style parameterizations~\citep{salimans2016weight} and explicit norm constraints: when the radial degree of freedom is removed or fixed, how does this shape the training trajectory?

\section*{Acknowledgments}

Kaiyue Wen acknowledges support from the Stanford Graduate Fellowship.
Tengyu Ma acknowledges support from NSF grant 2522743.
This work was supported by the Google TPU Research Cloud (TRC), the Stanford HAI--Google Cloud Credits Program, and NSF RI 2045685, and is part of the Marin Project.
The authors would like to thank Songlin Yang, Zihan Qiu, and Liliang Ren for motivating this project.
To some extent, this work is a proof of concept showing that it is possible to remove weight decay altogether by designing optimizers that explicitly control weight norms.
The authors would also like to thank William Held, David Hall, Suhas Kotha, Tatsunori Hashimoto, Jason Lee, Zhiyuan Li, Lijie Chen, Huaqing Zhang, Jiacheng You, Jeremy Bernstein, Shu Zhong, Samuel Schoenholz, Evan Walters, and Omead Pooladzandi for helpful discussions.

\bibliographystyle{abbrvnat}
\bibliography{hyperball}

\end{document}